%% file: main.tex
\documentclass{article}

\usepackage[preprint, nonatbib]{neurips}
\usepackage{natbib}
\setcitestyle{authoryear,round,citesep={;},aysep={,},yysep={;}}

\input{extra_pkgs}

\usepackage{physics}
\usepackage{mathtools}
\usepackage{amsthm}
\usepackage{wrapfig}
\usepackage{subcaption}
\usepackage{multirow}
\usepackage{colortbl}
\usepackage{xspace}

\newtheorem{theorem}{Theorem}
\newtheorem{proposition}{Proposition}
\newtheorem{corollary}{Corollary}
\newtheorem{lemma}{Lemma}
\newtheorem{remark}{Remark}
\theoremstyle{definition}
\newtheorem{definition}{Definition}
\newtcolorbox{theorybox}{colback=gray!8!white,colframe=gray!60!black,boxrule=0.5pt,arc=2pt,left=5pt,right=5pt,top=5pt,bottom=5pt}
\newenvironment{lemma*}[1][]{\begin{theorybox}\noindent\textbf{Lemma}\ifx\\#1\\\else~{[#1]}\fi.~}{\end{theorybox}}
\newenvironment{theorem*}[1][]{\begin{theorybox}\noindent\textbf{Theorem}\ifx\\#1\\\else~{[#1]}\fi.~}{\end{theorybox}}
\newenvironment{proposition*}[1][]{\begin{theorybox}\noindent\textbf{Proposition}\ifx\\#1\\\else~{[#1]}\fi.~}{\end{theorybox}}

\newenvironment{finding}{\begin{tcolorbox}[colback=blue!5!white,colframe=blue!50!black,boxrule=0.5pt,arc=2pt,left=5pt,right=5pt,top=5pt,bottom=5pt]}{\end{tcolorbox}}

\newcommand{\method}{CRISP\xspace}
\newcommand{\methodfull}{\textbf{C}ompressed \textbf{R}easoning via \textbf{I}terative \textbf{S}elf-\textbf{P}olicy Distillation\xspace}

\newcommand{\KL}{\mathrm{KL}}

\newcommand{\EE}{\mathbb{E}}

\newcommand{\D}{\mathcal{D}}
\newcommand{\Lcal}{\mathcal{L}}

\title{CRISP: Compressed Reasoning via Iterative Self-Policy Distillation}

\author{
    Hejian Sang\thanks{Equal contribution.}\thanks{Correspondence to \texttt{hejian@alumni.iastate.edu}} \\
    \texttt{hejian@alumni.iastate.edu}
    \And
    Yuanda Xu\footnotemark[1] \\
    \texttt{yuanda@math.princeton.edu}
    \And
    Zhengze Zhou\footnotemark[1] \\
    \texttt{zz433@cornell.edu}
    \And
    Ran He\footnotemark[1] \\
    \texttt{rh2528@columbia.edu}
    \And
    Zhipeng Wang \\
    \texttt{zhipeng.wang@alumni.rice.edu}
    \And
    Jiachen Sun \\
    \texttt{jiachens@umich.edu}
}

\begin{document}

\maketitle

\begin{abstract}

Reasoning models often generate far more tokens than a task requires, which raises inference cost and can compound errors. We introduce \method (\methodfull), an on-policy self-distillation method that teaches a model to reason more concisely by distilling its own concise behavior back into itself. The method uses a single idea: condition the same model on a ``be concise'' instruction to obtain teacher logits, then minimize the per-token reverse KL divergence between the student and this teacher on the student's own rollouts. It requires no ground-truth answers, no token budgets, and no difficulty estimators. The reverse-KL objective is naturally difficulty-adaptive: it compresses easy problems aggressively while preserving the reasoning steps that hard problems require. On Qwen3-14B, \method cuts reasoning length by up to 56\% on MATH-500 and 38\% on the harder AIME 2024, while improving MATH-500 accuracy by up to 3.3 points over the base model and holding AIME 2024 accuracy within about one point. This behavior generalizes across model sizes and families: Qwen3-8B shows the same compression with accuracy preserved, and DeepSeek-R1-Distill-Llama-8B improves accuracy on all five benchmarks while shortening its responses. General capabilities are preserved across all three models. Code is available at \url{https://github.com/HJSang/OPSD_Reasoning_Compression}.

\end{abstract}

\vspace{1em}
\begin{center}
\includegraphics[width=0.85\textwidth]{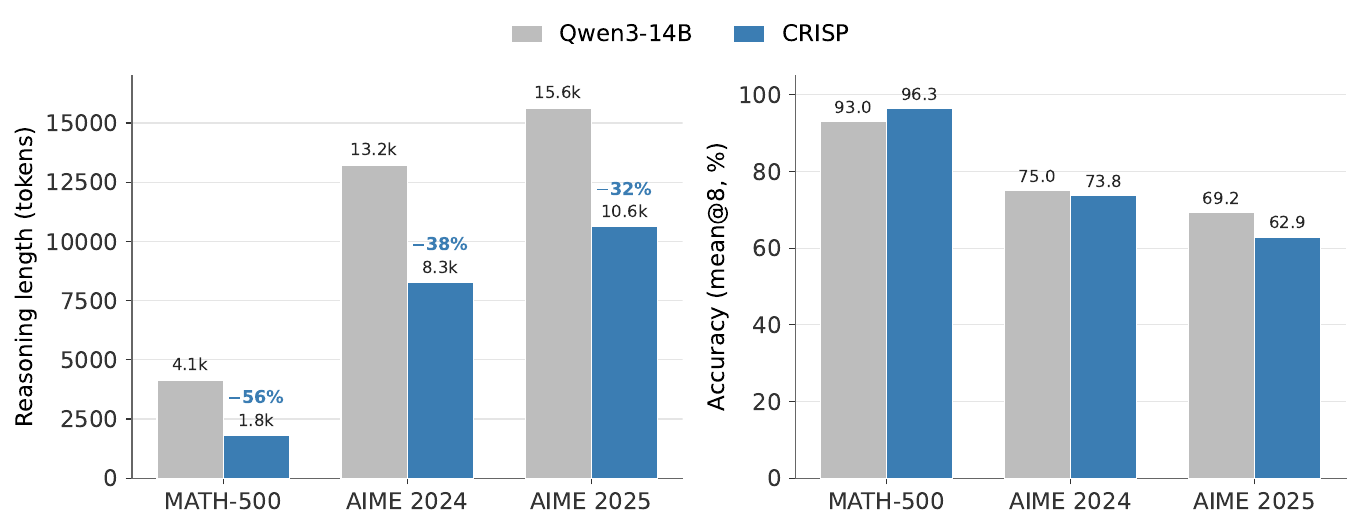}
\captionof{figure}{\textbf{Reasoning compression with preserved accuracy on Qwen3-14B.} Results across three math benchmarks of increasing difficulty under a 30K-token budget (mean@8). \textbf{a}, Average reasoning length: \method reduces length by 56\% on MATH-500, 38\% on AIME 2024, and 32\% on AIME 2025, with the largest reduction on the easiest benchmark. \textbf{b}, Accuracy: \method improves MATH-500 accuracy (93.0 to 96.3) and holds AIME 2024 within about one point, with a modest drop on the hardest AIME 2025 under this aggressive (uniform-instruction) setting.}
\label{fig:hero}
\end{center}
\vspace{1em}

\section{Introduction}
\label{sec:intro}

Large reasoning models generate long chains of intermediate reasoning before producing an answer. Systems such as OpenAI o1~\citep{jaech2024openai}, Gemini 2.5~\citep{comanici2025gemini}, DeepSeek-R1~\citep{deepseek2025r1}, and Qwen3~\citep{qwen2025qwen3} produce thousands of tokens of deliberation, exploring alternatives, checking intermediate steps, and verifying conclusions. This deliberation helps on hard problems, but the models also produce long reasoning traces on easy inputs where a short answer would suffice~\citep{snell2024scaling, muennighoff2025s1}. The excess tokens raise inference cost and latency, and on some problems they degrade accuracy by introducing errors in otherwise correct solutions.

Several families of methods address this problem (Appendix~\ref{app:survey} surveys recent work), but each has a limitation. Reinforcement learning methods add a length penalty to the reward, which requires ground-truth answers and can reduce the model's ability to explore~\citep{aggarwal2025l1, zhang2025dipo, chen2025dler}. Supervised fine-tuning methods train on externally curated short traces, which causes distribution shift and forgetting~\citep{huang2025seer, shenfeld2025sdft}. Most methods apply a uniform compression target rather than adapting to problem difficulty. Prompting methods change behavior only while the prompt is present and do not persist in the weights.

We propose \method (\methodfull), which avoids these limitations with a single mechanism: instruct the model to be concise, then distill that behavior back into the model so it persists without the instruction. Given a reasoning model $\pi_\theta$, we define:
\begin{itemize}[leftmargin=*, itemsep=2pt]
    \item \textbf{Teacher}: $\pi_\theta(\cdot \mid x, c)$, the same model conditioned on a conciseness instruction $c$ (for example: ``Solve concisely, avoid unnecessary steps'').
    \item \textbf{Student}: $\pi_\theta(\cdot \mid x)$, the same model without the compression instruction.
\end{itemize}
Training generates student rollouts and minimizes the per-token reverse KL divergence between student and teacher distributions. This on-policy self-distillation approach requires no ground-truth answers, no reward engineering, and no difficulty estimation. The compression signal emerges naturally from the KL objective, adapting automatically to problem difficulty.

\begin{table}[h]
\centering
\caption{\textbf{Comparison of reasoning compression methods.} \method uniquely combines on-policy training, no dependence on ground-truth (GT) answers, difficulty-adaptive compression, and entropy preservation (Appendix~\ref{app:entropy}).}
\label{tab:positioning}
\small
\renewcommand{\arraystretch}{1.15}
\setlength{\tabcolsep}{2pt}
\begin{tabular}{@{}l c c c c@{}}
\toprule
\textbf{Method} & \makecell{\textbf{On-}\\\textbf{policy}} & \makecell{\textbf{No GT}\\\textbf{needed}} & \makecell{\textbf{Difficulty-}\\\textbf{adaptive}} & \makecell{\textbf{Entropy-}\\\textbf{preserving}} \\
\midrule
RL + length penalty {\small\citep{aggarwal2025l1,zhang2025dipo}} & \cmark & \xmark & \xmark & \xmark \\
SFT on compressed CoT {\small\citep{huang2025seer}} & \xmark & \xmark & \xmark & \cmark \\
OPCD {\small\citep{ye2025opcd}} & \cmark & \xmark & \xmark & \cmark \\
DLER {\small\citep{chen2025dler}} & \cmark & \xmark & \xmark & \xmark \\
Prompting / pruning {\small\citep{xu2025chainofdraft}} & --- & \cmark & \xmark & \cmark \\
\textbf{\method (ours)} & \cmark & \cmark & \cmark & \cmark \\
\bottomrule
\end{tabular}
\end{table}

Table~\ref{tab:positioning} contrasts \method with representative methods from each paradigm. \method is the only approach that satisfies all four desiderata.

\paragraph{Summary of results.} On Qwen3-8B and Qwen3-14B, \method reduces MATH-500 reasoning length by 32\% to 57\% while preserving accuracy, and it improves Qwen3-14B MATH-500 accuracy by up to 3.3 points (Table~\ref{tab:main}). Compression is difficulty-adaptive: reductions are largest on the easier MATH-500 and smaller on the harder AIME benchmarks, where accuracy stays within about one point of the base model. The method readily adopts different concise behaviors: different conciseness instructions all yield strong compression with preserved accuracy, showing that the effect does not depend on a single hand-tuned prompt. The method transfers across model families: on DeepSeek-R1-Distill-Llama-8B it improves accuracy on all five benchmarks while shortening responses. Although \method trains only on math, out-of-domain accuracy on GPQA-Diamond and MMLU is preserved, so compressing math reasoning does not degrade general capabilities.

\section{Related Work}
\label{sec:related}

\paragraph{Reasoning compression via reinforcement learning.}
The most direct approach: penalize length in the reward function. L1~\citep{aggarwal2025l1} caps token count during GRPO training. DiPO~\citep{zhang2025dipo} and DIET~\citep{diet2025} estimate difficulty from rollout pass rates and set per-problem length targets. Leash~\citep{chen2025leash} shapes rewards with sigmoid functions; DLER~\citep{chen2025dler} adds curriculum learning. ThinkPrune~\citep{hou2025thinkprune} continuously trains long-thinking LLMs using reinforcement learning (RL) with an additional token-budget constraint while preserving answer correctness. The catch: all of these require ground-truth answers. No correct answer, no reward and no way to know if compression went too far. ~\citet{xu2026overconfident} further notes that reinforcement learning can induce overconfidence errors, thereby narrowing the model's reasoning boundary and reducing generation diversity.

\paragraph{Reasoning compression via supervised fine-tuning.}
Another route: curate short reasoning traces, then train on them. SEER~\citep{huang2025seer} samples many solutions and keeps the shortest correct ones. TokenSkip~\citep{xia2025tokenskip} learns which tokens to skip. DAP/LiteCoT~\citep{he2025dap} distills from stronger models; S3-CoT~\citep{yang2025s3cot} steers activations toward brevity. The problem is distribution shift: the student trains on someone else's reasoning and forgets its own~\citep{shenfeld2025sdft}.

\paragraph{Training-free compression.}
The lightweight option: change the prompt or the decoder, not the weights. Chain of Draft~\citep{xu2025chainofdraft} asks for minimal drafts instead of full reasoning. TrimR~\citep{wang2025trimr} prunes after the fact. NoWait~\citep{chen2025nowait} and FlowSteer~\citep{flowsteer2025} steer decoding toward conciseness. These methods are easy to deploy but achieve limited compression, and the effect vanishes when you change the prompt.

\paragraph{On-policy self-distillation.}
The closest relatives of our work use the model as its own teacher. OPSD~\citep{zhao2025opsd} gives the teacher the ground-truth answer, achieving 4--8$\times$ efficiency over GRPO. SDPO~\citep{hubotter2025sdpo} conditions on rich feedback for dense credit assignment. SDFT~\citep{shenfeld2025sdft} shows that on-policy distillation dramatically reduces forgetting compared to standard SFT, interpreting it as inverse RL. OPCD~\citep{ye2025opcd} distills system-prompt behaviors into weights. We contribute a new application: using a \emph{conciseness instruction} as the privileged context, achieving compression without any ground-truth supervision.

\section{Method}
\label{sec:method}

\subsection{Problem Formulation}
\label{sec:formulation}

\begin{figure*}[t]
\centering
\begin{tcolorbox}[
    enhanced,
    colback=studentbg,
    colframe=orange!60!black,
    coltitle=orange!60!black,
    colbacktitle=studentbg,
    title={\textbf{Student Prompt}\quad $\pi_\theta(\cdot \mid x)$},
    fonttitle=\bfseries,
    rounded corners,
    boxrule=1.2pt,
    width=0.95\textwidth
]
\ttfamily\small
Solve the following math problem step by step. The last line of your response should be of the form Answer: \$Answer (without quotes) where \$Answer is the answer to the problem.\\[6pt]
Find all real numbers $x$ such that $x^3 - 6x^2 + 11x - 6 = 0$.\\[4pt]
Remember to put your answer on its own line after ``Answer:''.
\end{tcolorbox}
\vspace{6pt}
\begin{tcolorbox}[
    enhanced,
    colback=teacherbg,
    colframe=green!60!black,
    coltitle=green!60!black,
    colbacktitle=teacherbg,
    title={\textbf{Teacher Prompt}\quad $\pi_{\bar{\theta}}(\cdot \mid x{,}\; c)$},
    fonttitle=\bfseries,
    rounded corners,
    boxrule=1.2pt,
    width=0.95\textwidth
]
\ttfamily\small
\textcolor{green!40!black}{\textbf{Conciseness instruction $c$ (uniform compression):}} Solve the following math problem concisely and correctly. Be direct: avoid unnecessary elaboration, redundant steps, or restating the problem. Focus only on the key reasoning steps needed to reach the answer.\\[4pt]
The last line of your response should be of the form Answer: \$Answer (without quotes) where \$Answer is the answer to the problem.\\[6pt]
Find all real numbers $x$ such that $x^3 - 6x^2 + 11x - 6 = 0$.\\[4pt]
Remember to put your answer on its own line after ``Answer:''.
\end{tcolorbox}
\caption{\textbf{Prompt example for student and teacher policies.} Both policies share the same model parameters but differ in conditioning context. The teacher receives only a \emph{conciseness instruction} $c$ prepended to the problem, here the uniform-compression instruction; no ground-truth answers or reference solutions are provided. This is the key distinction from prior self-distillation work~\citep{shenfeld2025sdft}, where the teacher receives the ground-truth solution as privileged information. The student prompt is the original prompt from the DAPO-17K dataset.}
\label{fig:prompts}
\end{figure*}

Consider a reasoning model $\pi_\theta$ that, given input $x$, generates a reasoning trace $r$ followed by an answer $a$, producing output $y = (r, a)$. The reasoning trace typically appears within \texttt{<think>}$\ldots$\texttt{</think>} delimiters. We aim to learn parameters $\theta^*$ such that the model produces shorter reasoning traces while maintaining accuracy.

Let $c$ denote a conciseness instruction (see Figure~\ref{fig:prompts} for a concrete example). Modern reasoning models can follow such instructions via in-context learning, producing shorter reasoning traces when $c$ is prepended to the input. We denote the conciseness-conditioned model as $\pi_\theta(\cdot \mid x, c)$ (teacher) and the unconditional model as $\pi_\theta(\cdot \mid x)$ (student). The teacher and student share parameters $\theta$ but receive different inputs.

\subsection{Training Objective}
\label{sec:objective}

\method minimizes the per-token reverse KL divergence between the student and a stop-gradient teacher on student-generated rollouts:
\begin{equation}
\label{eq:loss}
    \Lcal(\theta) = \EE_{x \sim \D, \; y \sim \pi_\theta(\cdot \mid x)} \left[ \sum_{t=1}^{|y|} D_{\KL} \Big( \pi_\theta(\cdot \mid x, y_{<t}) \;\Big\|\; \pi_{\bar{\theta}}(\cdot \mid x, c, y_{<t}) \Big) \right],
\end{equation}
where $\bar{\theta}$ denotes the teacher weights, which are periodically synchronized with the student (Section~\ref{sec:teacher}), and no gradients flow through the teacher's forward pass. The expectation over $y \sim \pi_\theta(\cdot \mid x)$ makes training \emph{on-policy}: the student is optimized on its own generation distribution, which prevents the distribution shift inherent in off-policy SFT.

\paragraph{Why reverse KL?}
We use reverse KL, $D_\KL(\pi_\theta \| \pi_{\bar\theta})$, rather than the
forward direction $D_\KL(\pi_{\bar\theta} \| \pi_\theta)$, and the choice is
important in our iterative setting. Reverse KL weights each gradient update by
the student's own distribution, so the student only adjusts in token regions it
actually generates. This is mode-seeking: it removes tokens the concise teacher
avoids while leaving the reasoning steps the teacher still uses, and it keeps
updates small because the student already covers the teacher's high-probability
modes. Forward KL instead weights updates by the teacher's distribution,
decoupling the update magnitude from how far the student has drifted. Since our
teacher is a stale copy of the student rather than a fixed external model, this
is unstable in practice (Appendix~\ref{app:kl_direction}): on Qwen3-8B, forward
KL collapses within the first ${\sim}100$ steps, with accuracy falling to near
zero and response length diverging to the token budget, whereas reverse KL
trains stably.

\subsection{Teacher Parameterization}
\label{sec:teacher}

A natural baseline is a \emph{fully frozen} teacher ($\bar{\theta} = \theta_0$) as in \citet{zhao2025opsd}. While simple and stable, the frozen teacher becomes an increasingly weak compression target as the student improves: once the student has internalized the initial conciseness signal, no further compression is possible because the reference distribution no longer leads the student.

To address this, we adopt a \emph{periodic teacher update} strategy. The teacher weights are synchronized with the current student weights every $M$ training steps:
\begin{equation}
\label{eq:teacher_update}
    \bar{\theta} \leftarrow \theta \quad \text{every } M \text{ steps}.
\end{equation}
Each refresh creates a new, stronger compression target: the updated teacher, when conditioned on the conciseness instruction $c$, produces traces that are more concise than the previous teacher's (since the student, now serving as the new teacher, has already learned to compress). This \emph{progressive compression} effect pushes the student to continuously shorten its reasoning over the course of training, beyond what a single frozen reference can achieve.

\paragraph{Difficulty-adaptive compression.}
\label{sec:difficulty}
Compression adapts naturally to problem difficulty: for easy problems, the concise teacher produces much shorter traces, creating strong KL signal; for hard problems, even the teacher needs extensive reasoning, yielding weak signal. We formalize this in Proposition~\ref{prop:difficulty} and verify it empirically in Section~\ref{sec:difficulty_analysis}.

\subsection{Training Algorithm}
\label{sec:algorithm}

The complete \method training procedure is given in Algorithm~\ref{alg:opsdc}.

\begin{algorithm}[t]
\caption{\method: On-Policy Self-Distillation for Concise Reasoning}
\label{alg:opsdc}
\KwIn{Model $\pi_\theta$, dataset $\D = \{x_i\}$, conciseness instruction $c$, learning rate $\eta$, teacher update interval $M$}
\KwOut{Compressed reasoning model $\pi_{\theta^*}$}
Initialize teacher: $\bar{\theta} \leftarrow \theta_0$\;
\For{each training step $k = 1, 2, \ldots$}{
    \If{$k \bmod M = 0$}{
        Update teacher: $\bar{\theta} \leftarrow \theta$ \tcp*{periodic refresh}
    }
    Sample batch $\{x_1, \ldots, x_B\} \sim \D$\;
    \For{each $x_i$ in batch}{
        Generate student rollout: $y_i \sim \pi_\theta(\cdot \mid x_i)$\; \label{line:rollout}
        \For{each token position $t = 1, \ldots, |y_i|$}{
            Compute student logits: $q_t \leftarrow \pi_\theta(\cdot \mid x_i, y_{i,<t})$\;
            Compute teacher logits: $p_t \leftarrow \pi_{\bar{\theta}}(\cdot \mid x_i, c, y_{i,<t})$ \tcp*{no grad}
            Compute $D_\KL(q_t \| p_t)$\;
        }
        $\Lcal_i \leftarrow \sum_{t} D_\KL(q_t \| p_t)$\;
    }
    Update student: $\theta \leftarrow \theta - \eta \nabla_\theta \frac{1}{B}\sum_i \Lcal_i$\;\tcp*{normalized by $|y_i|$ in practice}
}
\Return $\pi_{\theta^*}$\;
\end{algorithm}

\paragraph{Computational cost and simplicity.} The entire training pipeline requires only standard supervised training infrastructure: no reward models, no value functions, no advantage estimation, and no multi-rollout sampling. Each training step requires two forward passes per rollout token: one for the student (with gradient) and one for the teacher (without gradient, and cacheable within each $M$-step window). The periodic teacher refresh (Eq.~\ref{eq:teacher_update}) is a simple weight copy with negligible cost. This simplicity yields substantial efficiency gains over RL methods, which require multiple rollouts per prompt, reward model inference, and complex optimization (e.g., PPO clipping, GAE).

The per-token KL objective also does not require complete rollouts, because it supplies a training signal at every position rather than only at the final answer. Truncated rollouts therefore suffice: our ablation (Section~\ref{sec:ablation_len}) shows that \method trains robustly across maximum rollout lengths of 1K, 4K, 8K, and 30K tokens, with comparable compression and accuracy. Short rollouts cut generation cost, the dominant expense in on-policy training, without degrading the result. This is a further advantage over outcome-reward RL, which needs full-length completions to compute a terminal reward.





\section{Theoretical Analysis}
\label{sec:theory_summary}

We now summarize key theoretical properties of \method that illuminate why such a simple objective can produce strong compression without the failure modes of length-penalized RL. In particular, we connect the per-token loss to sequence-level KL, interpret the update as implicit reward maximization, and analyze when compression preserves accuracy, adapts to difficulty, and avoids catastrophic forgetting. Proof sketches are provided inline; full proofs are deferred to Appendix~\ref{sec:theory_main}.

\subsection{Training Loss as Sequence-Level KL}
\label{sec:theory_chain}

The first result connects the practical per-token training objective to a standard information-theoretic quantity, enabling all subsequent analysis.

\begin{lemma*}[Training loss equals sequence-level KL; Lemma~\ref{lem:chain_rule}]
The per-token \method loss \textnormal{(Eq.~\ref{eq:loss})} equals the sequence-level KL divergence $D_\KL(\pi_\theta(\cdot \mid x) \| \pi_{\bar{\theta}}(\cdot \mid x, c))$ by the chain rule for autoregressive models.
\end{lemma*}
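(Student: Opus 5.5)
The plan is to expand the sequence-level KL with the autoregressive factorization and use the tower property of conditional expectation. Fix $x$ and write $q(y) = \pi_\theta(y \mid x)$ and $p(y) = \pi_{\bar\theta}(y \mid x, c)$. Both models are autoregressive, so
\[
q(y) = \prod_{t=1}^{|y|} q_t(y_t \mid y_{<t}), \qquad p(y) = \prod_{t=1}^{|y|} p_t(y_t \mid y_{<t}),
\]
where $q_t = \pi_\theta(\cdot \mid x, y_{<t})$ and $p_t = \pi_{\bar\theta}(\cdot \mid x, c, y_{<t})$. These are the same per-position distributions that appear in Eq.~\eqref{eq:loss} and Algorithm~\ref{alg:opsdc}.

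\textbf{Step 1: reduce to a sum of log-ratios.} The sequence-level divergence is
\[
D_\KL(q\|p) = \EE_{y\sim q}\bigl[\log q(y) - \log p(y)\bigr] = \EE_{y\sim q}\Bigl[\sum_{t=1}^{|y|} \log \tfrac{q_t(y_t\mid y_{<t})}{p_t(y_t\mid y_{<t})}\Bigr].
\]

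\textbf{Step 2: condition on the prefix.} For each summand, condition on $y_{<t}$. Under $y\sim q$, the token $y_t$ given the prefix is distributed as $q_t$. Therefore
\[
\EE\bigl[\log \tfrac{q_t(y_t)}{p_t(y_t)} \bigm| y_{<t}\bigr] = D_\KL(q_t\|p_t).
\]
Summing over $t$ and taking the outer expectation over $y\sim q$ gives
\[
\EE_{y\sim q}\Bigl[\sum_t D_\KL(q_t\|p_t)\Bigr],
\]
which is exactly the bracketed term in Eq.~\eqref{eq:loss}. Averaging over $x\sim\D$ then identifies $\Lcal(\theta)$ with $\EE_{x}\bigl[D_\KL(\pi_\theta(\cdot\mid x)\|\pi_{\bar\theta}(\cdot\mid x,c))\bigr]$. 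So the lemma should be read pointwise in $x$, or in expectation over $\D$.

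\textbf{Main obstacle: variable length.} The only real difficulty is that $|y|$ is random, so the sum has a random number of terms. I would handle it by padding: treat EOS as an absorbing token, so that after termination both $q_t$ and $p_t$ put all their mass on a padding symbol. This makes every later KL term zero. Alternatively, write the sum as $\sum_{t\ge1}\mathbf 1[|y|\ge t]\,(\cdot)$ and note that the event $\{|y|\ge t\}$ is determined by $y_{<t}$. The conditioning argument in Step 2 then still applies term by term.

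Exchanging the infinite sum and the expectation is justified by Tonelli, because each conditional term $D_\KL(q_t\|p_t)$ is nonnegative. Under the practical finite token budget the sum is finite and no interchange issue arises.

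We also assume absolute continuity ($p_t>0$ wherever $q_t>0$), which holds automatically for softmax outputs. Otherwise both sides equal $+\infty$ and the identity still holds.

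Finally, note that the stop-gradient on $\bar\theta$ plays no role in the identity of values. It matters only for the gradient, which is not part of this statement.
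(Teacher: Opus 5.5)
Your proposal is correct and follows the paper's route: factor both sequence probabilities autoregressively, turn the log of the product into a sum of per-token log-ratios, and identify each summand in expectation with the per-token KL at the sampled prefix. Your explicit tower-property step and your handling of random length make precise what the paper simply asserts as ``recognizing each summand as a per-token KL.'' One minor caveat concerns unbounded lengths: Tonelli only justifies swapping the expectation with the sum of the nonnegative conditional KLs, not with the sum of the signed log-ratios, so there you would instead pass through length-$n$ truncations and the monotone convergence of KL along the increasing prefix $\sigma$-algebras; under the finite token budget you invoke, the argument is complete as written.
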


\begin{proof}[Proof sketch]
By the autoregressive factorization $q(y \mid x) = \prod_t q(y_t \mid x, y_{<t})$, the log-ratio $\log \frac{q(y \mid x)}{p(y \mid x)}$ decomposes into $\sum_t \log \frac{q(y_t \mid x, y_{<t})}{p(y_t \mid x, y_{<t})}$. Taking expectations over $y \sim q$ yields the per-token KL sum, which equals the sequence-level KL by definition.
\end{proof}

This identification underpins all subsequent results by letting us apply standard information-theoretic tools (Pinsker's inequality, the data-processing inequality) to the per-token loss.

\subsection{Implicit Reward Interpretation}
\label{sec:theory}

Following the inverse RL framework of~\citet{shenfeld2025sdft}, we show that \method implicitly maximizes a reward function that combines task performance with a conciseness preference.

\begin{theorem}[Implicit reward]
\label{thm:reward}
The \method objective (Eq.~\ref{eq:loss}) is equivalent to maximizing the expected implicit reward:
\begin{equation}
    r(y_t, x) = \log \pi_{\bar{\theta}}(y_t \mid x, c, y_{<t}) - \log \pi_\theta(y_t \mid x, y_{<t}).
\end{equation}
\end{theorem}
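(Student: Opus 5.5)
The plan is to unfold the per-token reverse KL in Eq.~\ref{eq:loss} into an expectation of log-ratios, and then recognize that quantity as the negative of an expected reward. Write $q_t(\cdot) = \pi_\theta(\cdot \mid x, y_{<t})$ and $p_t(\cdot) = \pi_{\bar\theta}(\cdot \mid x, c, y_{<t})$. By definition,
\begin{equation*}
D_\KL(q_t \,\|\, p_t) = \EE_{y_t \sim q_t}\bigl[\log q_t(y_t) - \log p_t(y_t)\bigr] = -\,\EE_{y_t \sim q_t}\bigl[r(y_t, x)\bigr],
\end{equation*}
where $r$ is exactly the implicit reward in the statement (the dependence on $y_{<t}$ is left implicit, as in the theorem).

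Next we substitute this into Eq.~\ref{eq:loss}. Since $y \sim \pi_\theta(\cdot \mid x)$ is generated autoregressively, the outer expectation over $y_{<t}$ composed with the inner expectation over $y_t \sim q_t$ is just the expectation over the full rollout. The tower property therefore gives
\begin{equation*}
\Lcal(\theta) = -\,\EE_{x\sim\D,\, y\sim\pi_\theta(\cdot\mid x)}\Bigl[\textstyle\sum_{t=1}^{|y|} r(y_t, x)\Bigr].
\end{equation*}
Equivalently, by the chain-rule Lemma (Lemma~\ref{lem:chain_rule}), this equals $-\EE_y[\log \pi_{\bar\theta}(y\mid x,c) - \log\pi_\theta(y\mid x)]$. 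Minimizing $\Lcal$ is thus the same as maximizing the expected cumulative implicit reward.

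The remaining care concerns in what sense the two problems are ``equivalent,'' since $r$ depends on $\theta$ through the $-\log\pi_\theta$ term. I would make this precise in two ways. First, the objective values coincide identically, so the two problems share the same optimizers. Second, the gradients coincide. The teacher is held fixed, with no gradient flowing through $\bar\theta$. Under the score-function identity, the extra term $\EE_{y\sim\pi_\theta}[\nabla_\theta \log \pi_\theta(y\mid x)] = 0$ vanishes. Hence $\nabla_\theta \Lcal = -\EE_y\bigl[\bigl(\sum_t r(y_t,x)\bigr)\nabla_\theta \log\pi_\theta(y\mid x)\bigr]$, which is the REINFORCE gradient for the reward $r$ treated as fixed. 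This is the same correspondence used by \citet{shenfeld2025sdft}.

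I expect this gradient-level step to be the main obstacle. It needs an interchange of differentiation and expectation, which is valid over a finite vocabulary with a bounded length. It also requires being explicit that $\bar\theta$ is stop-gradient and constant between periodic refreshes. Finally, I would add an interpretive remark: $r$ splits into $\log\pi_{\bar\theta}(y_t\mid x,c,y_{<t})$, which rewards tokens favored by the concise teacher, and $-\log\pi_\theta$, an entropy bonus.
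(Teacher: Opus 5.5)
Your proposal is correct and follows the paper's route: both rewrite each per-token reverse KL as $-\EE_{y_t\sim\pi_\theta}[r(y_t,x)]$, and your tower-property step (equivalently Lemma~\ref{lem:chain_rule}) only makes the sum over $t$ explicit. Your score-function argument, showing that $\nabla_\theta\Lcal$ equals the REINFORCE gradient for $r$ frozen at the current $\theta$, sharpens the paper, which only calls the equivalence an interpretation with a policy-dependent reward; note that this is the exact gradient of Eq.~\ref{eq:loss} and not the update in Algorithm~\ref{alg:opsdc}, which differentiates the per-token KL on fixed sampled rollouts and so omits the credit that later-token KL terms assign to earlier tokens.
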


\begin{proof}[Proof sketch]
Expanding the reverse KL:
\begin{align}
    D_\KL\big(\pi_\theta(\cdot \mid x, y_{<t}) \| \pi_{\bar{\theta}}(\cdot \mid x, c, y_{<t})\big) &= \EE_{y_t \sim \pi_\theta} \left[\log \frac{\pi_\theta(y_t \mid x, y_{<t})}{\pi_{\bar{\theta}}(y_t \mid x, c, y_{<t})}\right] \\
    &= -\EE_{y_t \sim \pi_\theta}\big[r(y_t, x)\big].
\end{align}
Since $r(y_t, x) = \log \pi_{\bar{\theta}} - \log \pi_\theta$ naturally decomposes into a teacher-favoring term $\log \pi_{\bar{\theta}}$ and an entropy-like term $-\log \pi_\theta$, minimizing reverse KL can be interpreted as maximizing an implicit, policy-dependent reward-shaping objective. This is closely related to maximum-entropy RL, but not identical to the standard setting with a fixed environment reward.
\end{proof}

\begin{remark}
The implicit reward $r(y_t, x)$ is positive when the concise teacher assigns higher probability to token $y_t$ than the student does, and negative otherwise. Thus, \method implicitly rewards concise reasoning without any explicit length penalty. Within each $M$-step window, the teacher weights $\bar{\theta}$ are fixed, serving as an implicit trust region: the student can only move as far as the teacher's concise distribution allows, preventing unbounded policy drift. The periodic refresh (Eq.~\ref{eq:teacher_update}) then shifts this trust region forward, enabling progressive compression across windows.
\end{remark}

\subsection{Accuracy Preservation}
\label{sec:theory_accuracy}

A natural concern is whether compression degrades accuracy. The following theorem shows that accuracy loss is bounded by two interpretable quantities.

\begin{theorem*}[Accuracy preservation; Theorem~\ref{thm:accuracy}]
If training converges to loss $\epsilon_\KL$ and the concise teacher preserves accuracy to within $\epsilon_T$ of the base model, the student satisfies:
\begin{equation}
\mathrm{Acc}(\pi_{\theta^*}) \geq \mathrm{Acc}(\pi_{\bar{\theta}}) - \epsilon_T - \sqrt{\epsilon_\KL / 2}.
\end{equation}
\end{theorem*}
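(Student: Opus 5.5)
The plan is to reduce accuracy to the expectation of a bounded correctness indicator under a sequence-level distribution. We then control the student--teacher gap with total variation and Pinsker's inequality, and finish with the teacher-accuracy assumption.

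Fix an input $x$ and let $g_x(y)\in\{0,1\}$ indicate that the answer extracted from the full output $y=(r,a)$ is correct. Define $\mathrm{Acc}(\pi_\theta)=\EE_{x\sim\D}\EE_{y\sim\pi_\theta(\cdot\mid x)}[g_x(y)]$ for the student. For the teacher, define the concise accuracy $\mathrm{Acc}_c(\pi_{\bar\theta})=\EE_{x}\EE_{y\sim\pi_{\bar\theta}(\cdot\mid x,c)}[g_x(y)]$. The hypothesis ``the concise teacher preserves accuracy to within $\epsilon_T$'' is read as $\mathrm{Acc}_c(\pi_{\bar\theta})\ge \mathrm{Acc}(\pi_{\bar\theta})-\epsilon_T$, where $\mathrm{Acc}(\pi_{\bar\theta})$ is the uninstructed accuracy of the teacher weights (the base model at the relevant refresh). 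The convergence hypothesis is read as $\Lcal(\theta^*)\le\epsilon_\KL$ with $\bar\theta$ the teacher in force at convergence.

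\textbf{Step 1 (sequence-level KL).} By the lemma identifying the training loss with sequence-level KL (Lemma~\ref{lem:chain_rule}), we have
\[
\EE_x\, D_\KL\big(\pi_{\theta^*}(\cdot\mid x)\,\|\,\pi_{\bar\theta}(\cdot\mid x,c)\big)=\Lcal(\theta^*)\le\epsilon_\KL .
\]
This step also removes any worry about truncated rollouts or per-token normalization. We state the theorem for the unnormalized sum in Eq.~\ref{eq:loss}.

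\textbf{Step 2 (per-input bound).} For each $x$, since $0\le g_x\le1$, the gap $\big|\EE_{\pi_{\theta^*}(\cdot\mid x)}g_x-\EE_{\pi_{\bar\theta}(\cdot\mid x,c)}g_x\big|$ is at most the total variation distance $\mathrm{TV}_x$. Pinsker's inequality then gives $\mathrm{TV}_x\le\sqrt{D_\KL^{(x)}/2}$, where $D_\KL^{(x)}$ is the per-input KL from Step 1.

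\textbf{Step 3 (average over $x$).} Averaging over $x\sim\D$ and applying Jensen's inequality to the concave square root gives
\[
\EE_x\sqrt{D_\KL^{(x)}/2}\le\sqrt{\EE_x D_\KL^{(x)}/2}\le\sqrt{\epsilon_\KL/2}.
\]
Hence $\mathrm{Acc}(\pi_{\theta^*})\ge\mathrm{Acc}_c(\pi_{\bar\theta})-\sqrt{\epsilon_\KL/2}$.

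\textbf{Step 4 (combine).} Chaining Step 3 with the teacher hypothesis yields the stated inequality.

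The main obstacle is definitional rather than technical. The statement compares against $\mathrm{Acc}(\pi_{\bar\theta})$ and a ``base model'' simultaneously, and $\bar\theta$ is refreshed periodically. I will therefore fix the convention above explicitly: $\bar\theta$ is the final teacher snapshot, and $\epsilon_T$ is measured relative to that snapshot's uninstructed accuracy. I will note that the result is a per-window guarantee, and that errors accumulate additively across refreshes if one wants a bound relative to $\theta_0$.

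A second subtlety is that $g_x$ must be a function of the whole sequence, including answer extraction. This makes the TV bound on full-sequence distributions the right tool, since a per-token bound would not suffice. It is also why Step 1 must come first.
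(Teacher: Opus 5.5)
Your proposal is correct and follows the paper's proof step for step: identify the loss with the expected sequence-level KL through the chain-rule corollary, bound the gap in probability of the correctness event by total variation, control total variation by Pinsker per input and Jensen over $x$, and finish with the teacher-quality hypothesis, which the paper states exactly as you read it, $\mathrm{Acc}(\pi_{\bar\theta}(\cdot\mid\cdot,c)) \geq \mathrm{Acc}(\pi_{\bar\theta}) - \epsilon_T$ (the order in which you apply Pinsker, Jensen and the indicator bound does not matter). One side remark needs correcting: Step 1 does not by itself dispose of truncated rollouts, because with a length cap the chain rule only gives the KL between truncated-prefix distributions, and that need not control the full-sequence event of a correct final answer, so like the paper you are simply stating the result for the untruncated loss of Eq.~\ref{eq:loss}.
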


\begin{proof}[Proof sketch]
Apply Pinsker's inequality to convert the KL bound $\epsilon_\KL$ into a total variation bound $\sqrt{\epsilon_\KL/2}$ between student and teacher distributions. Total variation bounds the difference in probability of any event, in particular the correctness event $A(x)$, so the student's accuracy is within $\sqrt{\epsilon_\KL/2}$ of the teacher's. Combining with the teacher quality assumption $\epsilon_T$ via the triangle inequality yields the result.
\end{proof}

The bound decomposes accuracy loss into two independent, interpretable terms: teacher quality ($\epsilon_T$) and distillation gap ($\sqrt{\epsilon_\KL/2}$). When the concise teacher is at least as accurate as the base model ($\epsilon_T \leq 0$), the bound becomes $\mathrm{Acc}(\pi_{\theta^*}) \geq \mathrm{Acc}(\pi_{\bar{\theta}}) + |\epsilon_T| - \sqrt{\epsilon_\KL/2}$, so accuracy is preserved, and improves whenever the teacher's gain exceeds the distillation gap. This matches Table~\ref{tab:main}: accuracy holds within about a point on the already-strong Qwen3 models and improves where the base model has headroom (e.g.\ DeepSeek-R1-Distill-Llama-8B).

\subsection{Difficulty-Adaptive Compression}
\label{sec:theory_difficulty}

A key design question for any compression method is how to allocate budget across problems of varying difficulty. We show that \method handles this \emph{automatically}: the compression signal is provably stronger on easy problems.

\begin{proposition*}[Difficulty-adaptive compression; Proposition~\ref{prop:difficulty}]
The per-token compression signal $S(x)$ is non-increasing in problem difficulty $d(x)$: easy problems receive strong compression pressure while hard problems, whose reasoning steps are predominantly essential, receive weak pressure.
\end{proposition*}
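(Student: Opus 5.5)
The proposition is informal, so the first step fixes definitions that make it precise. For each problem $x$ I set
\[
S(x) = \frac{1}{\EE_{y\sim\pi_\theta(\cdot\mid x)}|y|}\,D_\KL\big(\pi_\theta(\cdot \mid x)\,\|\,\pi_{\bar\theta}(\cdot\mid x,c)\big),
\]
which is the average per-token loss. By the chain-rule Lemma above it equals the average of the per-token reverse KL terms in Eq.~\ref{eq:loss}.

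Next I model difficulty through a step decomposition. At each position $t$ of a student trace, the reasoning step being generated is either \emph{essential}, meaning needed to reach the answer, or \emph{redundant}, meaning restatement, re-verification or elaboration. Let $\rho(x)\in[0,1]$ denote the expected fraction of positions that are essential. The difficulty-modelling assumption is that $\rho$ is non-decreasing in $d(x)$: harder problems have a larger share of necessary steps.

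The two structural assumptions about the conciseness-conditioned teacher then read as follows.
\begin{enumerate}
\item On essential positions the instruction $c$ barely changes the next-token distribution, so the per-token KL there is at most some small $\delta_{e}$.
\item On redundant positions the teacher shifts mass toward skipping or terminating, so the per-token KL is at least $\delta_r > \delta_e$.
\end{enumerate}
More generally, I let $\kappa_e(x)$ and $\kappa_r(x)$ be the conditional expected per-token KL on essential and redundant positions, and assume $\kappa_e\le\kappa_r$ with both non-increasing or constant in $d$.

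The core computation applies the tower property over position types:
\[
S(x) = \rho(x)\,\kappa_e(x) + \big(1-\rho(x)\big)\,\kappa_r(x) = \kappa_r(x) - \rho(x)\big(\kappa_r(x)-\kappa_e(x)\big).
\]
If $d_1\le d_2$, then $\rho(x_1)\le\rho(x_2)$ and the gap $\kappa_r-\kappa_e\ge 0$, which together give $S(x_1)\ge S(x_2)$, so $S$ is non-increasing. Where $\kappa$ depends on $x$, I compare termwise. Taking $\kappa$ constant in difficulty is the cleanest case, and the general version follows from monotonicity of each factor. The length normalization needs care, so I define $\rho$ as a token-weighted fraction, which makes the decomposition exact.

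The interpretive claims come out as limiting cases. Easy problems have $\rho\approx 0$, so $S\approx\kappa_r$ and the compression pressure is strong. Hard problems have $\rho\approx1$, so $S\approx\kappa_e\approx0$ and the pressure is weak.

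The main obstacle is that the argument is only as strong as the modelling assumptions. Monotonicity is essentially assumed through $\rho$, and the essential/redundant split is not intrinsic to token distributions. I would present these as explicit assumptions in the appendix version (Proposition~\ref{prop:difficulty}) and argue plausibility from the empirical finding that teacher length reductions shrink with difficulty (Section~\ref{sec:difficulty_analysis}). I would also note that mode-seeking reverse KL only penalizes positions the student actually visits, which is why the weighting is by the student's own $\rho$.
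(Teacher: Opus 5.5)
Your proposal is correct and follows essentially the paper's argument. You split the per-token KL into essential and compressible positions, write $S(x)=\kappa_r-\rho(x)(\kappa_r-\kappa_e)$ as a decreasing affine function of the essential fraction, and conclude from $\rho$ being non-decreasing in $d$ plus a nonnegative category gap; in the constant-$\kappa$ case these are exactly the paper's assumptions (A1)--(A3). Your variations are minor: you normalize by $\EE|y|$ with a token-weighted $\rho$ instead of using the paper's $\EE_y\big[\tfrac{1}{|y|}\sum_t D_\KL(\cdot)\big]$, you define essential tokens semantically rather than by the sign of the implicit reward, and you let $\kappa_e,\kappa_r$ decrease with $d$. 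For that last generalization, justify it by $S(x_2)\le\rho(x_2)\kappa_e(x_1)+(1-\rho(x_2))\kappa_r(x_1)\le S(x_1)$ rather than a factor-by-factor comparison, since the gap $\kappa_r-\kappa_e$ need not be monotone in $d$.
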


\begin{proof}[Proof sketch]
Decompose the normalized KL into essential and compressible token contributions: $S(x) = \rho(x) \cdot D_\mathcal{E} + (1 - \rho(x)) \cdot D_\mathcal{C}$, where $\rho(x)$ is the fraction of essential tokens, and $D_\mathcal{E}, D_\mathcal{C}$ are the category-level KL divergences. Since compressible tokens carry strictly larger KL ($D_\mathcal{C} > D_\mathcal{E}$) and the essential fraction $\rho(x)$ is non-decreasing in difficulty, $S(x) = D_\mathcal{C} - \rho(x)(D_\mathcal{C} - D_\mathcal{E})$ is a decreasing affine function of $\rho(x)$, hence non-increasing in $d(x)$.
\end{proof}

This formalizes the empirical pattern (Table~\ref{tab:main}) that \method compresses the easier MATH-500 more aggressively than the harder AIME benchmarks: on Qwen3-14B, reductions are up to 56\% on MATH-500 but only 32--38\% on AIME, without any explicit difficulty estimation.

\subsection{Bounded Forgetting}
\label{sec:theory_forgetting}

A central advantage of on-policy self-distillation over off-policy SFT is controlled divergence from the original model. We formalize this via the \emph{conciseness gap}.

\begin{proposition*}[Bounded forgetting; Proposition~\ref{prop:forgetting}]
Divergence from the base model is bounded by:
\begin{equation}
\EE_x\big[d_\mathrm{TV}(\pi_{\theta^*}(\cdot \mid x),\, \pi_{\theta_0}(\cdot \mid x))\big] \leq \sqrt{\epsilon_\KL / 2} + \EE_x[\gamma(x)],
\end{equation}
where $\gamma(x)$ is the conciseness gap, i.e., the total variation between the base model's outputs with and without the conciseness instruction.
\end{proposition*}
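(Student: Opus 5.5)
The plan is a triangle inequality for total variation, with the frozen initial teacher $\pi_{\theta_0}(\cdot \mid x, c)$ as the intermediate point. For each $x$,
\[
d_\mathrm{TV}\big(\pi_{\theta^*}(\cdot\mid x), \pi_{\theta_0}(\cdot\mid x)\big) \le d_\mathrm{TV}\big(\pi_{\theta^*}(\cdot\mid x), \pi_{\bar\theta}(\cdot\mid x,c)\big) + d_\mathrm{TV}\big(\pi_{\bar\theta}(\cdot\mid x,c), \pi_{\theta_0}(\cdot\mid x)\big).
\]
The second term is the conciseness gap $\gamma(x)$ once we identify $\bar\theta$ with $\theta_0$. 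This holds exactly in the frozen-teacher setting, and in the periodic-refresh setting within the first window. More generally, I would \emph{define} $\gamma(x)$ with respect to the teacher that the converged student is compared against, noting the reading in the statement as the frozen case. I will state this identification as an explicit assumption, since for later windows the teacher is no longer $\theta_0$.

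For the first term, I use Lemma~\ref{lem:chain_rule}: the converged per-token loss equals the sequence-level KL $D_\KL(\pi_{\theta^*}(\cdot\mid x)\,\|\,\pi_{\bar\theta}(\cdot\mid x,c))$. Pinsker's inequality then gives $d_\mathrm{TV} \le \sqrt{D_\KL/2}$ pointwise in $x$. To pass to expectations, I apply Jensen to the concave square root:
\[
\EE_x\sqrt{D_\KL(x)/2} \le \sqrt{\EE_x D_\KL(x)/2} = \sqrt{\epsilon_\KL/2}.
\]
This matches how $\epsilon_\KL$ was used in Theorem~\ref{thm:accuracy}, where it is the converged value of the loss in Eq.~\ref{eq:loss}, an expectation over $x$.

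Taking expectations of the triangle inequality and combining the two bounds gives the claim. The computations are routine: Pinsker, Jensen, and the metric property of TV. The main obstacle is conceptual rather than technical. Under periodic refresh the final teacher is the student itself, not $\theta_0$, so the bound must either be stated relative to the frozen reference, or be telescoped across the $K$ refresh windows. Telescoping would give a sum of per-window terms $\sqrt{\epsilon_k/2} + \gamma_k$, which the single-term statement hides. I would present the frozen-teacher argument as the proof and add a remark that, with refreshes, the same argument applied window by window yields the telescoped bound.
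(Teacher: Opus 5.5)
Your proposal is correct and matches the paper's proof: a triangle inequality through the concise teacher $\pi_{\theta_0}(\cdot\mid x,c)$ in the first window where $\bar\theta=\theta_0$, Pinsker plus Jensen via Corollary~\ref{cor:loss_equals_kl} for the student--teacher term, and the remaining term identified as $\gamma(x)$. Your telescoping remark for later refresh windows is a sound extension that gives a genuine bound back to $\theta_0$, whereas the paper only restates the single-window bound relative to each window's starting teacher $\bar\theta$.
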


\begin{proof}[Proof sketch]
Apply the triangle inequality for total variation: $d_\mathrm{TV}(\pi_{\theta^*}, \pi_{\theta_0}) \leq d_\mathrm{TV}(\pi_{\theta^*}, \pi_{\theta_0}(\cdot \mid c)) + d_\mathrm{TV}(\pi_{\theta_0}(\cdot \mid c), \pi_{\theta_0})$. The first term is bounded by $\sqrt{\epsilon_\KL/2}$ via Pinsker's inequality on the converged training loss; the second term is the conciseness gap $\gamma(x)$ by definition.
\end{proof}

For hard problems where the conciseness instruction has little effect, $\gamma(x) \approx 0$, so forgetting is minimal where it matters most. This contrasts with off-policy SFT, whose forgetting depends on the full distribution mismatch between teacher data and the base model, a gap that can be arbitrarily large.

\subsection{Compression Reduces Compounding Error}
\label{sec:theory_compounding}

Finally, we provide a probabilistic model explaining the most striking empirical finding: shorter reasoning traces can \emph{improve} accuracy rather than degrade it.

\begin{proposition*}[Compression reduces compounding error; Proposition~\ref{prop:compounding}]
Under a model where each token independently introduces a reasoning error with probability $p_\mathrm{err}$, compressing from $L$ to $\alpha L$ tokens yields an accuracy ratio $(1 - p_\mathrm{err})^{-(1-\alpha)L}$, which grows exponentially in the number of removed tokens.
\end{proposition*}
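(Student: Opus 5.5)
The plan is to formalize the independent-error model and compute both accuracies in closed form. Model a reasoning trace of length $\ell$ as a sequence of $\ell$ token positions. At each position a reasoning error occurs independently with probability $p_\mathrm{err}\in(0,1)$, and the final answer is correct iff no error occurs anywhere in the trace. Under these assumptions the correctness event is the intersection of $\ell$ independent events, each of probability $1-p_\mathrm{err}$, so
\[
\mathrm{Acc}(\ell) = (1-p_\mathrm{err})^{\ell}.
\]
I will state this as the defining assumption of the model rather than derive it. For non-integer $\alpha L$ I treat it as the rounded length, or simply read the formula with real exponents.

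The two lengths are the original trace length $L$ and the compressed length $\alpha L$ with $\alpha\in(0,1]$. Taking the ratio of the two accuracies gives
\[
\frac{\mathrm{Acc}(\alpha L)}{\mathrm{Acc}(L)} = (1-p_\mathrm{err})^{\alpha L - L} = (1-p_\mathrm{err})^{-(1-\alpha)L},
\]
which is exactly the claimed expression. For the growth claim, rewrite the ratio as $\exp\bigl((1-\alpha)L\cdot(-\log(1-p_\mathrm{err}))\bigr)$. Here $-\log(1-p_\mathrm{err})>0$, and $(1-\alpha)L$ is the number of removed tokens. So the ratio is an exponential in that count with a positive rate, and it is at least $1$, with equality only when $\alpha=1$. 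For small $p_\mathrm{err}$ one can add the approximation $e^{p_\mathrm{err}(1-\alpha)L}$, using $-\log(1-p)\ge p$, which gives a clean lower bound.

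None of these steps is a real mathematical obstacle; the calculation is elementary. The substantive issue is the modeling caveat. The argument compares the same per-token error rate before and after compression, so it implicitly assumes that the removed tokens are redundant and that the retained tokens keep the same error rate. I would state this assumption explicitly, so the proposition reads as an idealized explanation of why shortening can help rather than as a guarantee. I would also note that essential tokens cannot be removed without harming correctness, a point that connects to the essential/compressible decomposition used in Proposition~\ref{prop:difficulty}.
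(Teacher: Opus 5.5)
Your proposal is correct and takes the same route as the paper, which also posits $\mathrm{Acc}(L)=(1-p_\mathrm{err})^L$, computes the ratio directly, and uses $\ln(1-p)\le -p$ to get $(1-p_\mathrm{err})^{-(1-\alpha)L}\ge e^{(1-\alpha)Lp_\mathrm{err}}$, then applies $e^u\ge 1+u$ for the linear bound $1+(1-\alpha)Lp_\mathrm{err}$. One small correction: $-\log(1-p)\ge p$ holds for every $p\in(0,1)$, so your exponential lower bound is exact and needs no small-$p_\mathrm{err}$ qualifier or the word ``approximation''.
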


\begin{proof}[Proof sketch]
Direct computation: the accuracy ratio $(1-p_\mathrm{err})^{\alpha L} / (1-p_\mathrm{err})^L = (1-p_\mathrm{err})^{-(1-\alpha)L}$. Using $\ln(1-p) \leq -p$ and $e^u \geq 1 + u$, this is at least $1 + (1-\alpha)L \cdot p_\mathrm{err}$. On MATH-500 with $L \approx 4{,}139$ and $\alpha \approx 0.44$ (Qwen3-14B, 30K budget; Table~\ref{tab:main}), even $p_\mathrm{err} = 10^{-4}$ gives a linear lower bound of ${\sim}23\%$ relative accuracy improvement (${\sim}26\%$ from the exact exponential form).
\end{proof}

This provides a \emph{lower} bound on the accuracy benefit of compression: in practice, reasoning errors are positively correlated (one incorrect step causes subsequent steps to build on a false premise), amplifying the gain beyond the independence assumption. Consistent with this, compression improves accuracy where the base model has room to improve, most clearly on DeepSeek-R1-Distill-Llama-8B (MATH-500 $71.3{\to}82.1$; Table~\ref{tab:main}), while preserving accuracy on the already-strong Qwen3 models.

\section{Experiments}
\label{sec:experiments}
\subsection{Experimental Setting}
\label{sec:setup}

\paragraph{Models and data.} We evaluate \method on Qwen3-8B and Qwen3-14B~\citep{qwen2025qwen3} and DeepSeek-R1-Distill-Llama-8B~\citep{deepseek2025r1}, training on ${\sim}$13,600 competition-level math problems from DAPO-Math-17k~\citep{yu2025dapo} \emph{without ground-truth answers}; only problem statements are used to generate student rollouts. We train for 1 epoch with learning rate $1 \times 10^{-6}$, global batch size 32, periodic teacher update (interval $M{=}50$; see ablation in Section~\ref{sec:ablation_M}), and $8\times$ H200 GPUs. Although nominally a full epoch, the algorithm converges quickly at around ${\sim}$100 steps. Each prompt generates a single student rollout (temperature 1.0) with a maximum response length of 8,192 tokens. Because \method optimizes a per-token KL objective rather than an outcome-based reward, there is no need to generate complete responses as in RL methods; partial rollouts already provide a useful training signal. This phenomenon is also observed by \cite{chen2025distilling} in the context of knowledge distillation. Full training and infrastructure details are in Appendix~\ref{app:implementation}.

\paragraph{Conciseness instructions.} The teacher is defined by the conciseness instruction $c$ prepended to the problem. We use two instructions, shown below: the \emph{uniform concise prompt} (v1), which asks for directness on every problem, and the \emph{difficulty-aware concise prompt} (v2), which additionally instructs the model not to over-compress hard problems. Both are prepended only to the teacher; the student sees the plain problem.

\begin{tcolorbox}[enhanced, colback=teacherbg, colframe=green!55!black, boxrule=0.8pt, left=3pt, right=3pt, top=2pt, bottom=2pt, fonttitle=\bfseries, title={Conciseness instructions $c$}]
\small
\textbf{v1 (uniform concise prompt):} Solve the following math problem concisely and correctly. Be direct: avoid unnecessary elaboration, redundant steps, or restating the problem. Focus only on the key reasoning steps needed to reach the answer.\\[4pt]
\textbf{v2 (difficulty-aware concise prompt):} Solve the following math problem concisely and correctly. Prioritize a correct final answer above brevity. For straightforward problems, be direct: avoid unnecessary elaboration, redundant steps, or restating the problem. For difficult or multi-step problems, do NOT over-compress: keep every reasoning step needed for correctness, including case analysis, edge cases, and a brief check of the final answer. Never omit a step that the answer depends on.
\end{tcolorbox}

\paragraph{Benchmarks.} \method is trained only on math prompts, so we separate evaluation into in-domain and out-of-domain tasks. \emph{In-domain}, we use three mathematical reasoning benchmarks spanning a wide difficulty range: MATH-500~\citep{hendrycks2021math} (500 problems), AIME 2024 (30 problems), and AIME 2025 (30 problems). \emph{Out-of-domain}, we use two general-capability benchmarks that the model never trains on: GPQA-Diamond~\citep{rein2023gpqa} (graduate-level, Google-proof science questions) and MMLU~\citep{hendrycks2021mmlu} (massive multitask knowledge). The out-of-domain benchmarks test whether compressing math reasoning degrades general knowledge and scientific reasoning. We define a \emph{token budget} as the maximum response length allowed during inference, a practical lever for controlling serving cost, and report results under a 30,000-token budget, which effectively eliminates truncation and enables a fair accuracy comparison.

\paragraph{Scoring.} Unless otherwise noted, all math accuracies reported in this paper (MATH-500, AIME 2024/2025) use the \emph{dual-path} scorer: a response is correct if \emph{either} an extracted ``Answer: $X$'' line or a literal ``$\backslash$boxed\{$\cdot$\}'' expression math-verifies against the gold answer, building on the veRL math grading utility~\citep{sheng2024hybridflow}.\footnote{Dual-path scoring: \url{https://github.com/HJSang/CRISP_Reasoning_Compression/blob/main/workspace/src/rewards/dual_path_math_verify.py}, built on veRL's \texttt{math\_dapo} grader: \url{https://github.com/verl-project/verl/blob/main/verl/utils/reward_score/math_dapo.py}.} This avoids undercounting models whose native answer format differs from the boxed convention (Appendix~\ref{app:answer_format}). GPQA-Diamond and MMLU are scored by exact letter match on the multiple-choice answer.

\subsection{Main Results}
\label{sec:main_results}

\begin{table}[t]
\centering
\caption{\textbf{\method compresses reasoning while preserving accuracy (token budget = 30K).} For each benchmark we report accuracy (Acc, mean@8, \%) and token reduction relative to the base model (Red., \%; ``---'' marks the base reference, negative values indicate the response grew longer). \method is trained with a periodic teacher update ($M{=}50$); v1 and v2 denote the uniform and difficulty-aware conciseness instructions (Section~\ref{sec:ablation_prompt}). The first three benchmarks are in-domain; GPQA-Diamond and MMLU are out-of-domain and verify that general capabilities are preserved. Per-row response lengths and the inference-only ``concise prompt'' baselines are in Appendix~\ref{app:full_main}, Table~\ref{tab:main_full}.}
\label{tab:main}
\small
\setlength{\tabcolsep}{4pt}
\resizebox{\linewidth}{!}{%
\begin{tabular}{@{}l cc cc cc cc cc@{}}
\toprule
& \multicolumn{2}{c}{\textbf{MATH-500}} & \multicolumn{2}{c}{\textbf{AIME 2024}} & \multicolumn{2}{c}{\textbf{AIME 2025}} & \multicolumn{2}{c}{\textbf{GPQA-D}} & \multicolumn{2}{c}{\textbf{MMLU}} \\
\cmidrule(lr){2-3} \cmidrule(lr){4-5} \cmidrule(lr){6-7} \cmidrule(lr){8-9} \cmidrule(lr){10-11}
\textbf{Method} & Acc & Red. & Acc & Red. & Acc & Red. & Acc & Red. & Acc & Red. \\
\midrule
\multicolumn{11}{l}{\textit{Qwen3-8B}} \\
Base Model   & 95.7 & --- & \textbf{76.2} & --- & \textbf{70.4} & --- & \textbf{61.5} & --- & 81.9 & --- \\
\method~(v2) & \textbf{95.7} & 31.6\% & 75.0 & 17.1\% & 65.8 & 17.5\% & 58.3 & 17.2\% & 81.2 & 22.4\% \\
\method~(v1) & \textbf{95.7} & \textbf{56.9\%} & 72.9 & \textbf{32.9\%} & 58.8 & \textbf{28.4\%} & 58.5 & \textbf{36.2\%} & 80.9 & \textbf{44.7\%} \\
\midrule
\multicolumn{11}{l}{\textit{Qwen3-14B}} \\
Base Model   & 93.0 & --- & 75.0 & --- & \textbf{69.2} & --- & \textbf{62.2} & --- & \textbf{85.1} & --- \\
\method~(v2) & 95.2 & 34.7\% & \textbf{75.0} & 19.7\% & 67.1 & 16.8\% & 62.0 & 20.7\% & 83.9 & 22.4\% \\
\method~(v1) & \textbf{96.3} & \textbf{56.3\%} & 73.8 & \textbf{37.5\%} & 62.9 & \textbf{32.1\%} & 61.9 & \textbf{39.7\%} & 84.2 & \textbf{43.1\%} \\
\midrule
\multicolumn{11}{l}{\textit{DeepSeek-R1-Distill-Llama-8B}} \\
Base Model   & 71.3 & --- & 33.3 & --- & 25.0 & --- & 47.0 & --- & 71.5 & --- \\
\method~(v2) & 79.8 & 23.2\% & \textbf{42.1} & $-$2.5\% & 26.2 & 0.1\% & 46.7 & 7.0\% & 71.4 & 11.4\% \\
\method~(v1) & \textbf{82.1} & \textbf{31.6\%} & 39.2 & \textbf{6.3\%} & \textbf{27.1} & \textbf{7.1\%} & \textbf{48.3} & \textbf{10.2\%} & 71.7 & \textbf{17.6\%} \\
\bottomrule
\end{tabular}%
}
\end{table}

Table~\ref{tab:main} presents our main results under the 30,000-token budget, which eliminates response truncation for a fair accuracy comparison.\footnote{MMLU is evaluated using the Language Model Evaluation Harness~\citep{eval-harness}: \url{https://github.com/EleutherAI/lm-evaluation-harness}.} The first three benchmarks (MATH-500, AIME 2024/2025) are in-domain, and the last two (GPQA-Diamond, MMLU) are out-of-domain: \method trains only on math, so the out-of-domain columns measure whether compression harms general capabilities. We highlight four findings.

\paragraph{\method compresses reasoning while preserving accuracy.}
Across all three models, \method substantially shortens reasoning traces while keeping accuracy close to the base model, and it improves accuracy where the base model has room. On the in-domain math benchmarks it reduces length by up to 57\% (MATH-500) with accuracy held within about one point on MATH-500 and the AIME benchmarks. The accuracy effect depends on base-model headroom: the already-strong Qwen3-8B and Qwen3-14B (both above 93\% on MATH-500) are preserved, with Qwen3-14B gaining up to 3.3 points, while DeepSeek-R1-Distill-Llama-8B starts lower and \emph{improves} on all five benchmarks, by up to 10.8 points on MATH-500. The concise teacher removes redundant tokens that would otherwise accumulate errors, so the accuracy benefit is largest where the base model leaves the most room.

\paragraph{Compression naturally adapts to problem difficulty.}
\label{sec:difficulty_analysis}
Prior RL methods estimate difficulty from rollout pass rates~\citep{zhang2025dipo, diet2025} or train separate difficulty classifiers; \method requires none of this, as difficulty adaptation emerges from the KL objective. Using benchmarks as a difficulty proxy, \method compresses the easier MATH-500 by 32\% to 57\% but the harder AIME benchmarks by only 5\% to 38\%. The model compresses easy problems more than hard ones with no explicit difficulty estimate, which follows from the per-token KL signal (see \S\ref{sec:difficulty}).

\paragraph{Different conciseness instructions are readily adopted.}
\method works with different concise behaviors rather than a single hand-tuned prompt. The uniform instruction (v1) compresses more aggressively (up to 57\% on MATH-500) but drops more accuracy on the hardest benchmark, AIME 2025 (e.g.\ $-11.6$ points on Qwen3-8B). The difficulty-aware instruction (v2) compresses less (around 32\% to 35\% on MATH-500) but protects hard problems, cutting the AIME 2025 drop to $-4.6$ points on Qwen3-8B and $-2.1$ on Qwen3-14B. Both give strong compression with preserved accuracy, so the wording of the instruction trades compression against accuracy on hard problems without changing the overall behavior.

\paragraph{General knowledge and capabilities are preserved.}
Although \method trains only on math, out-of-domain accuracy is preserved: on GPQA-Diamond and MMLU, every model stays within about one point of its base accuracy, and DeepSeek even improves slightly. Compressing math reasoning therefore does not degrade general knowledge or scientific reasoning.

\paragraph{Qualitative examples.}
\label{subsec:wrong_to_correct}
Figure~\ref{fig:wrong_to_correct} shows two examples of \method compressing reasoning while reaching the same correct answer as the base model, and honoring the two conciseness instructions. In each case the base model, the difficulty-aware teacher (v2), and the uniform teacher (v1) all answer correctly, but v2 is much shorter than the base and v1 is shorter still, illustrating that v2 keeps slightly more reasoning than v1 (more examples in Appendix~\ref{app:qualitative}).

\begin{figure}[h!]
\centering

\textbf{Problem 1 (MATH-500):} \emph{Twelve $1\times1$ squares form a $3\times4$ rectangle; two shaded right triangles are drawn. What is the total shaded area?} \quad (Correct answer: 10)

\vspace{0.4em}
\begin{subfigure}[t]{0.32\textwidth}
\begin{tcolorbox}[enhanced, colback=gray!5, colframe=gray!50, fonttitle=\bfseries\scriptsize, title={Qwen3-8B base (8{,}422 tokens)}, left=3pt, right=3pt, top=2pt, bottom=2pt]
\scriptsize
{\ttfamily\color{gray!60}<think>}

\textcolor{gray}{\ldots}re-derives the $3\times4$ dimensions from the Asymptote code, then re-visualizes each triangle and repeatedly second-guesses which segment is the base\ldots

\textcolor{blue}{[several restatements of the same two triangles before committing]}

{\ttfamily\color{gray!60}</think>}

\vspace{1pt}\noindent\textcolor{gray!30}{\rule{\linewidth}{0.3pt}}\vspace{2pt}

Triangle 1: $\tfrac12\cdot2\cdot4=4$. Triangle 2: $\tfrac12\cdot3\cdot4=6$. Total $=10$.

Answer: \textcolor{green!50!black}{$\boldsymbol{10}$} \rlap{\;\textcolor{green!50!black}{\checkmark}}
\end{tcolorbox}
\end{subfigure}
\hfill
\begin{subfigure}[t]{0.32\textwidth}
\begin{tcolorbox}[enhanced, colback=teacherbg, colframe=green!55!black, fonttitle=\bfseries\scriptsize, title={\method~v2 (2{,}466 tok, \textbf{71\%})}, left=3pt, right=3pt, top=2pt, bottom=2pt]
\scriptsize
{\ttfamily\color{gray!60}<think>}

The grid is $3\times4$. Triangle 1 has legs $2$ and $4$: area $\tfrac12\cdot2\cdot4=4$. Triangle 2 has legs $3$ and $4$: area $\tfrac12\cdot3\cdot4=6$. The two triangles do not overlap (left vs.\ right), so the areas add.

{\ttfamily\color{gray!60}</think>}

\vspace{1pt}\noindent\textcolor{gray!30}{\rule{\linewidth}{0.3pt}}\vspace{2pt}

Total shaded area $=4+6=10$.

Answer: \textcolor{green!50!black}{$\boldsymbol{10}$} \rlap{\;\textcolor{green!50!black}{\checkmark}}
\end{tcolorbox}
\end{subfigure}
\hfill
\begin{subfigure}[t]{0.32\textwidth}
\begin{tcolorbox}[enhanced, colback=orange!3, colframe=orange!60!black, fonttitle=\bfseries\scriptsize, title={\method~v1 (2{,}018 tok, \textbf{76\%})}, left=3pt, right=3pt, top=2pt, bottom=2pt]
\scriptsize
{\ttfamily\color{gray!60}<think>}

Total rectangle area is $12$. Two right triangles: base $2$, height $4$ gives $4$; base $3$, height $4$ gives $6$.

{\ttfamily\color{gray!60}</think>}

\vspace{1pt}\noindent\textcolor{gray!30}{\rule{\linewidth}{0.3pt}}\vspace{2pt}

Areas: $4$ and $6$, sum $=10$.

Answer: \textcolor{green!50!black}{$\boldsymbol{10}$} \rlap{\;\textcolor{green!50!black}{\checkmark}}
\end{tcolorbox}
\end{subfigure}

\vspace{1.2em}

\textbf{Problem 2 (AIME 2024):} \emph{On an $8\times8$ grid, count length-16 lattice paths from the lower-left to the upper-right corner that change direction exactly four times.} \quad (Correct answer: 294)

\vspace{0.4em}
\begin{subfigure}[t]{0.32\textwidth}
\begin{tcolorbox}[enhanced, colback=gray!5, colframe=gray!50, fonttitle=\bfseries\scriptsize, title={Qwen3-8B base (6{,}041 tokens)}, left=3pt, right=3pt, top=2pt, bottom=2pt]
\scriptsize
{\ttfamily\color{gray!60}<think>}

\textcolor{gray}{\ldots}re-establishes that a path is $8$ R and $8$ U moves, then works through the run structure slowly, re-checking the ``four direction changes'' condition several times\ldots

{\ttfamily\color{gray!60}</think>}

\vspace{1pt}\noindent\textcolor{gray!30}{\rule{\linewidth}{0.3pt}}\vspace{2pt}

Start U: $\binom{7}{2}\binom{7}{1}=147$; start R: $147$. Total $294$.

Answer: \textcolor{green!50!black}{$\boldsymbol{294}$} \rlap{\;\textcolor{green!50!black}{\checkmark}}
\end{tcolorbox}
\end{subfigure}
\hfill
\begin{subfigure}[t]{0.32\textwidth}
\begin{tcolorbox}[enhanced, colback=teacherbg, colframe=green!55!black, fonttitle=\bfseries\scriptsize, title={\method~v2 (3{,}213 tok, \textbf{47\%})}, left=3pt, right=3pt, top=2pt, bottom=2pt]
\scriptsize
{\ttfamily\color{gray!60}<think>}

A path is $8$ R and $8$ U. Four direction changes means five monotone runs, so the moves split as $3$ runs of one letter and $2$ of the other. Distribute $8$ U's into $3$ runs: $\binom{7}{2}$; $8$ R's into $2$ runs: $\binom{7}{1}$. Two cases by starting letter.

{\ttfamily\color{gray!60}</think>}

\vspace{1pt}\noindent\textcolor{gray!30}{\rule{\linewidth}{0.3pt}}\vspace{2pt}

$2\times\binom{7}{2}\binom{7}{1}=2\cdot21\cdot7=294$.

Answer: \textcolor{green!50!black}{$\boldsymbol{294}$} \rlap{\;\textcolor{green!50!black}{\checkmark}}
\end{tcolorbox}
\end{subfigure}
\hfill
\begin{subfigure}[t]{0.32\textwidth}
\begin{tcolorbox}[enhanced, colback=orange!3, colframe=orange!60!black, fonttitle=\bfseries\scriptsize, title={\method~v1 (1{,}743 tok, \textbf{71\%})}, left=3pt, right=3pt, top=2pt, bottom=2pt]
\scriptsize
{\ttfamily\color{gray!60}<think>}

$8$ R and $8$ U moves; four direction changes give five runs, partitioned $3{+}2$. Segment counts: $\binom{7}{2}$ and $\binom{7}{1}$, times two starting letters.

{\ttfamily\color{gray!60}</think>}

\vspace{1pt}\noindent\textcolor{gray!30}{\rule{\linewidth}{0.3pt}}\vspace{2pt}

$2\times21\times7=294$.

Answer: \textcolor{green!50!black}{$\boldsymbol{294}$} \rlap{\;\textcolor{green!50!black}{\checkmark}}
\end{tcolorbox}
\end{subfigure}

\vspace{1.2em}

\textbf{Problem 3 (AIME 2025, format change):} \emph{Find $m+n$ where $\tfrac{m}{n}$ (in lowest terms) is the sum of all $k$ for which $|25+20i-z|=5$ and $|z-4-k|=|z-3i-k|$ has exactly one solution $z$.} \quad (Correct answer: 77)

\vspace{0.4em}
\begin{subfigure}[t]{0.32\textwidth}
\begin{tcolorbox}[enhanced, colback=gray!5, colframe=gray!50, fonttitle=\bfseries\scriptsize, title={Qwen3-8B base (4{,}286 tokens)}, left=3pt, right=3pt, top=2pt, bottom=2pt]
\scriptsize
{\ttfamily\color{gray!60}<think>}

\textcolor{gray}{\ldots}the circle is tangent to the perpendicular bisector line; solving the tangency condition gives $k=\tfrac{23}{8}$ and $k=\tfrac{123}{8}$, so the sum is $\tfrac{73}{4}$\ldots

{\ttfamily\color{gray!60}</think>}

\vspace{1pt}\noindent\textcolor{gray!30}{\rule{\linewidth}{0.3pt}}\vspace{2pt}

$m=73,\,n=4$, so $m+n = \boxed{77}$.

\vspace{2pt}
\textcolor{red}{\scriptsize No ``Answer:'' line; result only inside \texttt{\textbackslash boxed\{\}}.} \rlap{\;\textcolor{red}{\ding{55}}\;\scriptsize\textcolor{red}{(strict format)}}
\end{tcolorbox}
\end{subfigure}
\hfill
\begin{subfigure}[t]{0.32\textwidth}
\begin{tcolorbox}[enhanced, colback=teacherbg, colframe=green!55!black, fonttitle=\bfseries\scriptsize, title={\method~v2 (3{,}576 tok, \textbf{17\%})}, left=3pt, right=3pt, top=2pt, bottom=2pt]
\scriptsize
{\ttfamily\color{gray!60}<think>}

Tangency of the circle to the perpendicular bisector gives $k=\tfrac{23}{8},\tfrac{123}{8}$, summing to $\tfrac{73}{4}$, so $m=73,n=4$.

{\ttfamily\color{gray!60}</think>}

\vspace{1pt}\noindent\textcolor{gray!30}{\rule{\linewidth}{0.3pt}}\vspace{2pt}

$m+n = 73+4 = 77$.

Answer: \textcolor{green!50!black}{$\boldsymbol{77}$} \rlap{\;\textcolor{green!50!black}{\checkmark}}
\end{tcolorbox}
\end{subfigure}
\hfill
\begin{subfigure}[t]{0.32\textwidth}
\begin{tcolorbox}[enhanced, colback=orange!3, colframe=orange!60!black, fonttitle=\bfseries\scriptsize, title={\method~v1 (2{,}507 tok, \textbf{42\%})}, left=3pt, right=3pt, top=2pt, bottom=2pt]
\scriptsize
{\ttfamily\color{gray!60}<think>}

The tangency condition yields $k=\tfrac{23}{8}$ and $k=\tfrac{123}{8}$; their sum is $\tfrac{73}{4}$, so $m=73,n=4$.

{\ttfamily\color{gray!60}</think>}

\vspace{1pt}\noindent\textcolor{gray!30}{\rule{\linewidth}{0.3pt}}\vspace{2pt}

$m+n = 77$.

Answer: \textcolor{green!50!black}{$\boldsymbol{77}$} \rlap{\;\textcolor{green!50!black}{\checkmark}}
\end{tcolorbox}
\end{subfigure}

\caption{\textbf{\method compresses reasoning while preserving the correct answer, and honors both conciseness instructions.} On a MATH-500 problem (top) and an AIME 2024 problem (middle), the base model, the difficulty-aware teacher (v2), and the uniform teacher (v1) all reach the correct answer, but the \method responses are far shorter; v1 compresses most while v2 keeps slightly more reasoning (e.g.\ explicitly noting the triangles do not overlap, or spelling out the run-partition argument), consistent with its instruction not to over-compress. \textbf{Bottom:} an AIME 2025 format-change example, where the base model reports its result only inside ``$\backslash$boxed\{$\cdot$\}'' with no ``Answer:'' line (undercounted by a strict answer-format grader), while both \method variants emit the requested ``Answer:'' line (Appendix~\ref{app:answer_format}).}
\label{fig:wrong_to_correct}
\end{figure}

In addition to math reasoning tasks, we also show our method is very effective in agentic task like Deep Planning \citep{zhang2026deepplanning} in Appendix \ref{app:deepplanning}. 

\subsection{Ablation Study}
\label{sec:ablation}

\subsubsection{How Much Compression Is Too Much? A Sweet Spot}
\label{sec:ablation_sweetspot}

Compression increases steadily over training, but accuracy does not stay flat forever. To locate the useful operating point, we track a single Qwen3-8B run with the difficulty-aware teacher (v2, $M{=}50$) across a full epoch (Figure~\ref{fig:sweetspot}).

\begin{figure}[t]
\centering
\includegraphics[width=\linewidth]{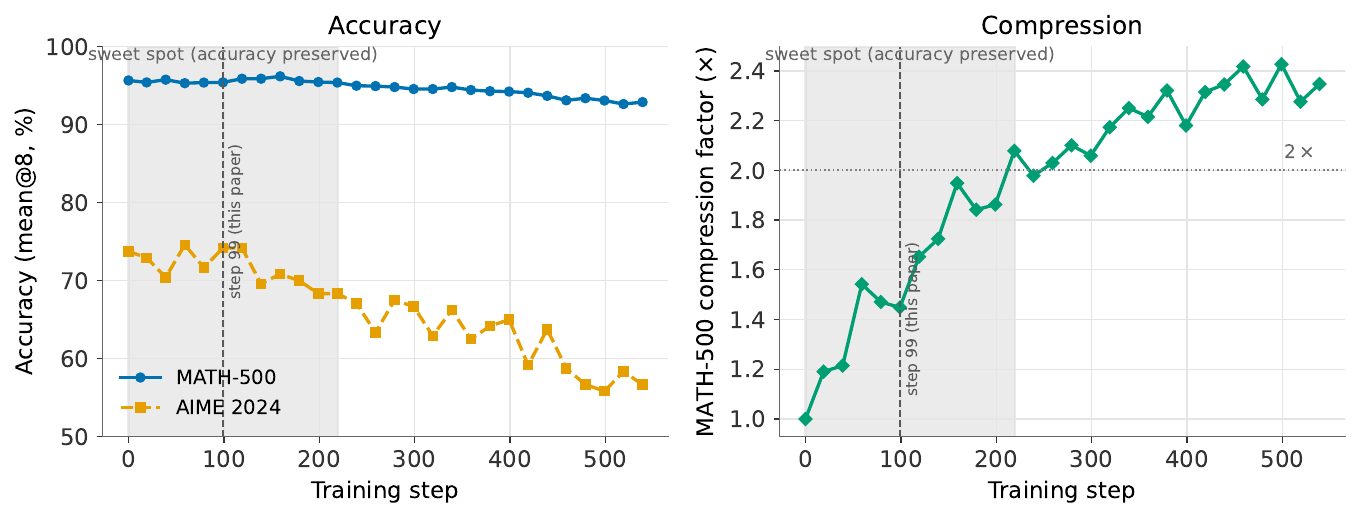}
\caption{\textbf{A compression sweet spot (Qwen3-8B, v2 teacher, full epoch).} \emph{Left:} accuracy over training. \emph{Right:} MATH-500 compression factor (base length / current length). Compression rises steadily with training; accuracy is preserved up to about $2\times$ compression, reached near step 220 (shaded band), and then declines, first on the harder AIME 2024 and later on MATH-500. The step-99 checkpoint used throughout the paper (dashed line) sits well inside this region at ${\sim}1.45\times$ compression. Pushing compression beyond the sweet spot trades accuracy for diminishing extra reduction.}
\label{fig:sweetspot}
\end{figure}

\begin{finding}
\textbf{Finding: compressing about $2\times$ is the sweet spot.} Up to roughly $2\times$ reduction in reasoning length, accuracy is preserved (MATH-500 stays near 95\%, AIME 2024 near its base level). Further training keeps shrinking responses, but the extra compression is modest and accuracy begins to drop, on AIME 2024 first and eventually on MATH-500.
\end{finding}

As training proceeds, MATH-500 length falls from the base $4{,}823$ tokens toward ${\sim}2{,}000$ (a ${\sim}2.4\times$ reduction by the end of the epoch), while MATH-500 accuracy holds near 95\% until about $2\times$ compression and then declines to ${\sim}93\%$. AIME 2024 is more sensitive: its accuracy is preserved (${\sim}70$--$74\%$) through the same region but erodes to ${\sim}57\%$ as compression is pushed to the end of the epoch, with only a few additional points of length reduction gained. The gain past $2\times$ is therefore small and comes at a real accuracy cost, so we stop within the sweet spot: the step-99 checkpoint used throughout this paper sits well inside this region at ${\sim}1.45\times$ compression, on the conservative side of the $2\times$ boundary, delivering strong compression with fully preserved accuracy. This also motivates the periodic-refresh and instruction ablations below, which control \emph{how fast} a run moves along this curve.

\subsubsection{Which Conciseness Instruction Compresses Best?}
\label{sec:ablation_prompt}

The teacher's behavior is shaped entirely by its conciseness instruction $c$, making the wording of $c$ a central design choice. We compare five instructions (Table~\ref{tab:prompt_variants}), each expressing ``be concise'' through a different mechanism: \textbf{v1} gives a single uniform ``be direct, avoid elaboration'' directive; our default \textbf{v2} adds a difficulty-aware caveat to protect hard problems; \textbf{v3} names the text categories that are safe to cut versus must-keep; \textbf{v4} frames the audience as an expert; and \textbf{v5} imposes a terse numbered-list format. All variants are trained identically (Qwen3-8B, full-vocab reverse-KL, $M{=}50$) and evaluated at step 99.

\begin{table}[t]
\centering
\caption{\textbf{Conciseness-instruction variants.} Each teacher prompt $c$ expresses ``be concise'' through a different mechanism. The answer-format suffix (``The last line\ldots Answer:'') is identical across variants and omitted here.}
\label{tab:prompt_variants}
\small
\renewcommand{\arraystretch}{1.25}
\begin{tcolorbox}[enhanced, colback=teacherbg, colframe=green!55!black, boxrule=1pt, left=2pt, right=2pt, top=2pt, bottom=2pt]
\begin{tabularx}{\linewidth}{@{}>{\bfseries}l >{\raggedright\arraybackslash}p{0.20\linewidth} >{\raggedright\arraybackslash}X@{}}
\toprule
\normalfont\textbf{ID} & \textbf{Mechanism} & \textbf{Instruction $c$ (abridged)} \\
\midrule
v1 & Uniform & Solve concisely and correctly. Be direct---avoid unnecessary elaboration, redundant steps, or restating the problem; focus only on the key reasoning steps needed to reach the answer. \\
\addlinespace
v2 & Difficulty-aware (default) & Solve concisely and correctly; prioritize a correct answer over brevity. Be direct on straightforward problems; for hard/multi-step ones do not over-compress---keep case analysis, edge cases, and a final check. \\
\addlinespace
v3 & Redundancy-targeted & Write only what is needed: omit restatements, narration, and obvious arithmetic; keep every logically necessary step (case splits, key idea, a one-line check). \\
\addlinespace
v4 & Expert-reader & Write for an expert who wants only the essential reasoning: state the key steps and decisive computation, skip standard-fact explanations and narration; include required case analysis/checks. \\
\addlinespace
v5 & Terse-structured & Present the solution as a minimal numbered list---each step one short line, no prose, no restatement; keep required cases and end with a one-line verification. \\
\bottomrule
\end{tabularx}
\end{tcolorbox}
\end{table}

\begin{figure}[t]
\centering
\includegraphics[width=\linewidth]{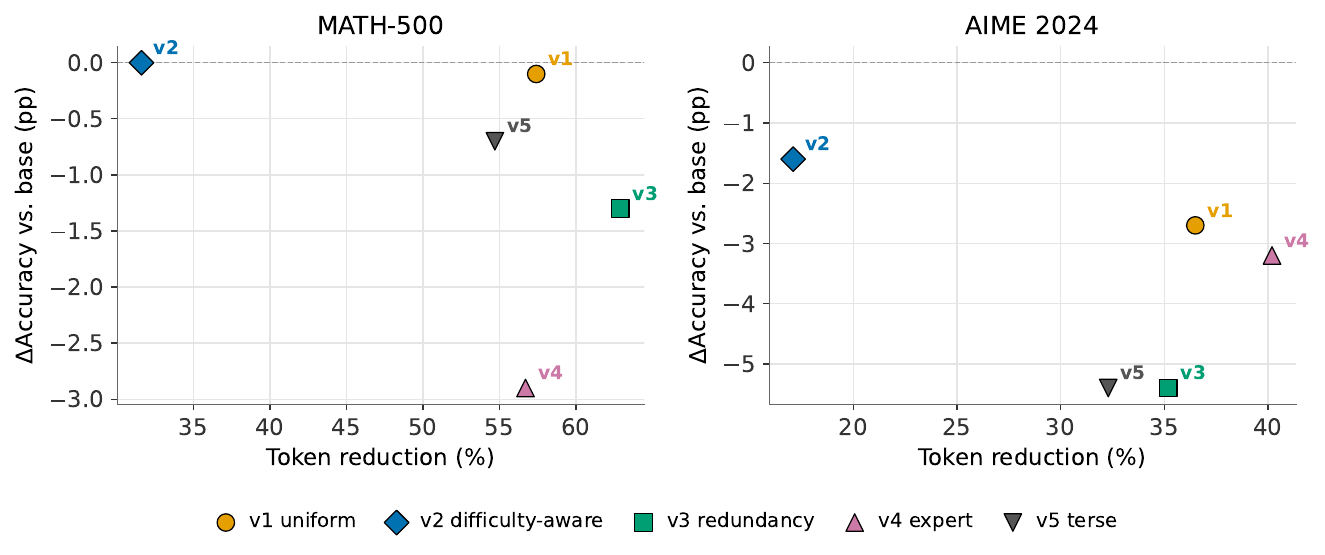}
\caption{\textbf{Conciseness-instruction ablation: accuracy--compression trade-off (Qwen3-8B, step 99, 30K budget).} Each point is one instruction (v1--v5); axes are token reduction (\%, vs.\ base) and accuracy change ($\Delta$Acc, pp vs.\ base mean@8) on \textbf{a}, MATH-500 and \textbf{b}, AIME 2024. Upper-right is better (more compression, less accuracy loss). Base model matches Table~\ref{tab:main} (95.7\% / 4{,}884 tok MATH-500; 76.2\% / 14{,}229 tok AIME 2024).}
\label{fig:prompt_ablation}
\end{figure}

Figure~\ref{fig:prompt_ablation} reveals a clear \textbf{accuracy--compression frontier across instructions}. The difficulty-aware \textbf{v2} default best preserves accuracy ($+0.0\%$ on MATH-500, $-1.6\%$ on AIME 2024) but compresses least (31.6\% / 17.1\%), because its explicit ``do not over-compress on hard problems'' caveat protects deliberation. The uniform \textbf{v1} instruction sits further along the frontier: dropping that caveat nearly doubles MATH-500 compression to 57.4\% (and AIME to 36.5\%) for only a small accuracy cost ($-0.1\%$ / $-2.7\%$), showing that the difficulty-aware guardrail trades substantial compression for a marginal accuracy gain. Pushing harder still, the redundancy-targeted \textbf{v3} reaches the highest MATH-500 compression (62.9\%) but at a real accuracy cost ($-1.3\%$ MATH-500, $-5.4\%$ AIME), as do \textbf{v4} (expert-reader; strongest AIME compression at 40.2\%) and \textbf{v5} (terse format). Overall, no single instruction dominates: v2 is the accuracy-safe default, v1 is the best choice when aggressive compression matters, and the more prescriptive v3--v5 buy extra compression only by giving up accuracy.

\subsubsection{How Sensitive Is Compression to the Teacher Update Interval?}
\label{sec:ablation_M}

The teacher update interval $M$ (Eq.~\ref{eq:teacher_update}) controls how frequently the teacher weights are synchronized with the student. A larger $M$ provides a more stable distillation target but limits progressive compression; a smaller $M$ pushes compression further but risks instability when the teacher chases a rapidly-moving student. We sweep $M \in \{1, 10, 30, 50, 100\}$ on Qwen3-8B (full-vocab reverse-KL, all else fixed). Because we evaluate at step 99, $M{=}100$ never reaches its first refresh and is therefore effectively a \emph{frozen} teacher. Figure~\ref{fig:m_ablation} reports accuracy and token reduction at step 99.

\begin{figure}[t]
\centering
\includegraphics[width=\linewidth]{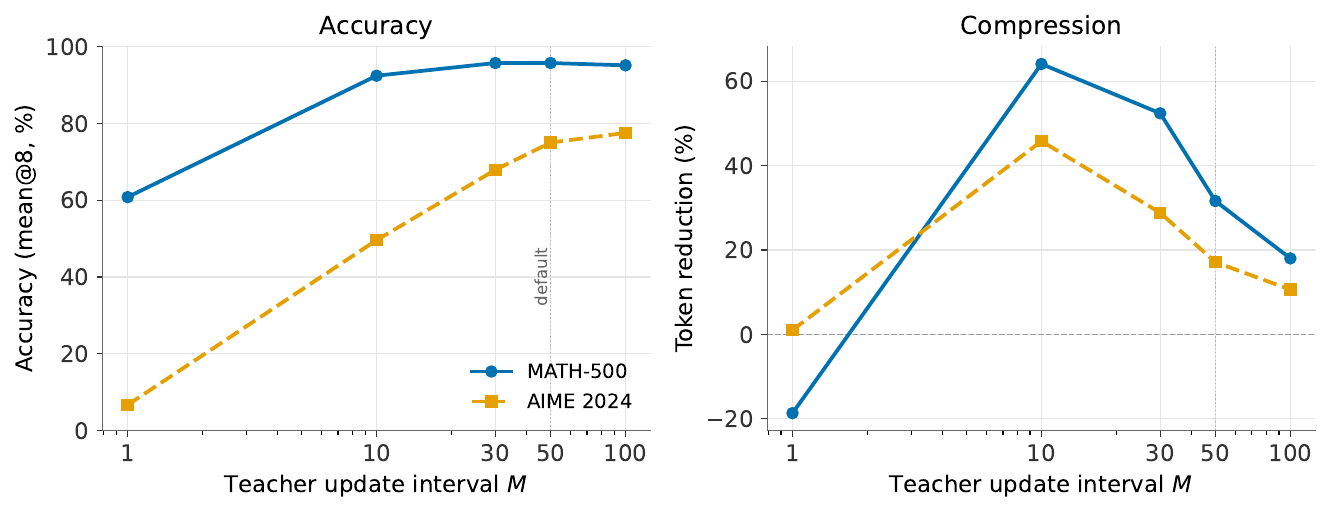}
\caption{\textbf{Teacher update interval $M$ ablation (Qwen3-8B, step 99, 30K budget).} Accuracy (\emph{left}) and token reduction (\emph{right}) vs.\ $M$ on MATH-500 and AIME 2024, with the base taken from Table~\ref{tab:main}. Smaller $M$ compresses more aggressively but, past a point, destabilizes training; $M{=}1$ collapses (MATH-500 response length \emph{grows}, hence negative reduction). At a step-99 evaluation $M{=}100$ has not yet refreshed and is effectively a frozen teacher. We use $M{=}50$ as the default (dotted line).}
\label{fig:m_ablation}
\end{figure}

Figure~\ref{fig:m_ablation} reveals a clear \textbf{compression--stability trade-off governed by $M$}:

\textbf{$M{=}1$ collapses.} Refreshing the teacher after every gradient step creates a moving-target feedback loop in which the student chases a teacher that is itself changing in response to the student. Training then degenerates: MATH-500 accuracy falls to 60.8\% (AIME 2024 to 6.7\%), MATH-500 response length grows rather than shrinks (negative reduction), and AIME compression collapses to near zero. This mirrors the instability reported by \citet{shenfeld2025sdft}.

\textbf{Small but non-trivial $M$ ($10$--$30$) compresses hardest.} $M{=}10$ removes 64\% of MATH-500 tokens and 46\% on AIME 2024, the most aggressive compression in the sweep, but at a real accuracy cost (AIME 2024 drops to 49.6\%). $M{=}30$ recovers most of that accuracy (67.9\%) while still compressing 52\% and 29\%.

\textbf{The frozen teacher ($M{=}100$) preserves accuracy but compresses least.} Since $M{=}100$ never refreshes before step 99, it acts as a single static concise teacher: accuracy stays high (95.1\% MATH-500, 77.5\% AIME 2024) but compression is only 18\% and 11\%. That every refreshing schedule ($M \leq 50$) compresses more confirms that periodic refresh, rather than a one-shot concise teacher, is what drives progressive compression.

We adopt $M{=}50$ as the default: it sits at the knee of this trade-off, retaining base-level accuracy on MATH-500 (95.7\%) and AIME 2024 (75.0\%) while still achieving substantial compression (31.6\% / 17.1\%).

\subsubsection{How Sensitive Is Compression to the Rollout Temperature?}
\label{sec:ablation_temp}

\method generates student rollouts with sampling temperature $\tau$, which controls the diversity of the trajectories the teacher then scores. We sweep $\tau \in \{0.3, 0.6, 1.0, 1.5\}$ with all else fixed (Qwen3-8B, full-vocab reverse-KL, $M{=}50$, v2 instruction), evaluating at step 99 (Figure~\ref{fig:temp_ablation}).

\begin{figure}[t]
\centering
\includegraphics[width=\linewidth]{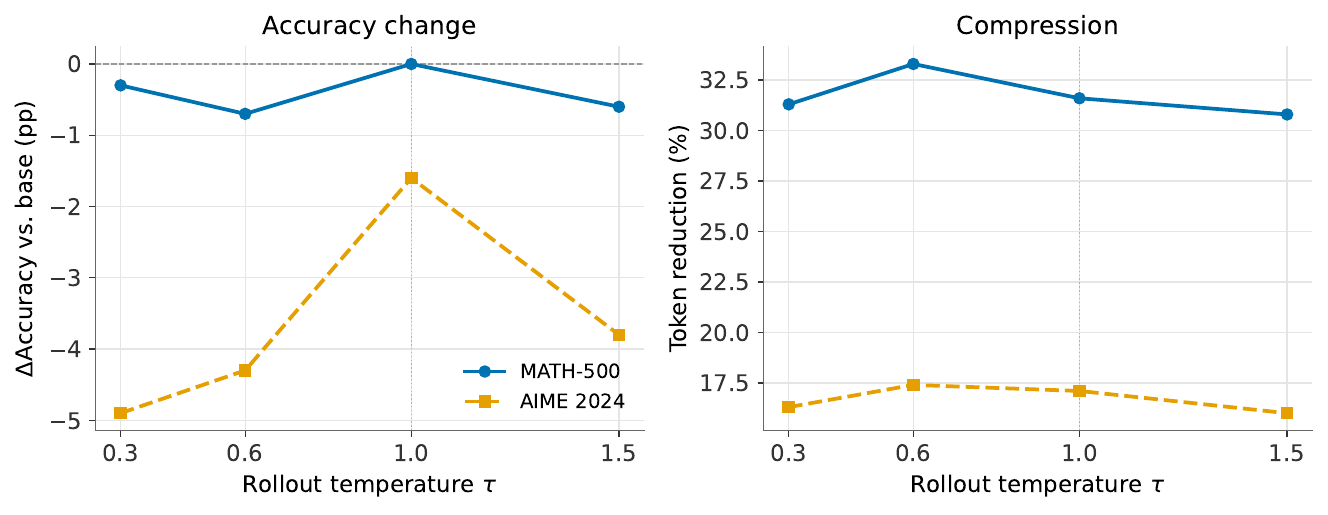}
\caption{\textbf{Rollout temperature ablation (Qwen3-8B, step 99, 30K budget).} Accuracy change ($\Delta$Acc vs.\ base mean@8, \emph{left}) and token reduction (\emph{right}) vs.\ the rollout temperature $\tau$ on MATH-500 and AIME 2024. Base model identical to Table~\ref{tab:main}. The curves are nearly flat, showing robustness across $\tau$; the default $\tau{=}1.0$ (dotted line) is the best operating point.}
\label{fig:temp_ablation}
\end{figure}

\method is robust to rollout temperature. Across the full $\tau \in [0.3, 1.5]$ range, MATH-500 accuracy stays within $0.7$\,pp of the base model and compression varies by less than $3$\,pp ($30.8$ to $33.3\%$). The default $\tau{=}1.0$ is the best overall operating point: it gives the smallest AIME accuracy drop ($-1.6\%$) at compression comparable to the rest of the sweep ($17.1\%$). Moderate sampling diversity exposes the teacher to a representative spread of student trajectories, without the degenerate sampling that very high or very low temperatures induce. We therefore keep $\tau{=}1.0$ as the default.

\subsubsection{How Sensitive Is Compression to the Rollout Length?}
\label{sec:ablation_len}

Because \method optimizes a per-token KL objective rather than an outcome reward, it does not require complete rollouts; partial trajectories already supply a training signal (Section~\ref{sec:setup}). We test how short the training rollouts can be by capping the student response length during training at $\{1\text{k}, 4\text{k}, 8\text{k}, 30\text{k}\}$ tokens, all else fixed (Qwen3-8B, $M{=}50$, v2), and evaluating at step 99 under the common 30K eval budget (Figure~\ref{fig:len_ablation}).

\begin{figure}[t]
\centering
\includegraphics[width=\linewidth]{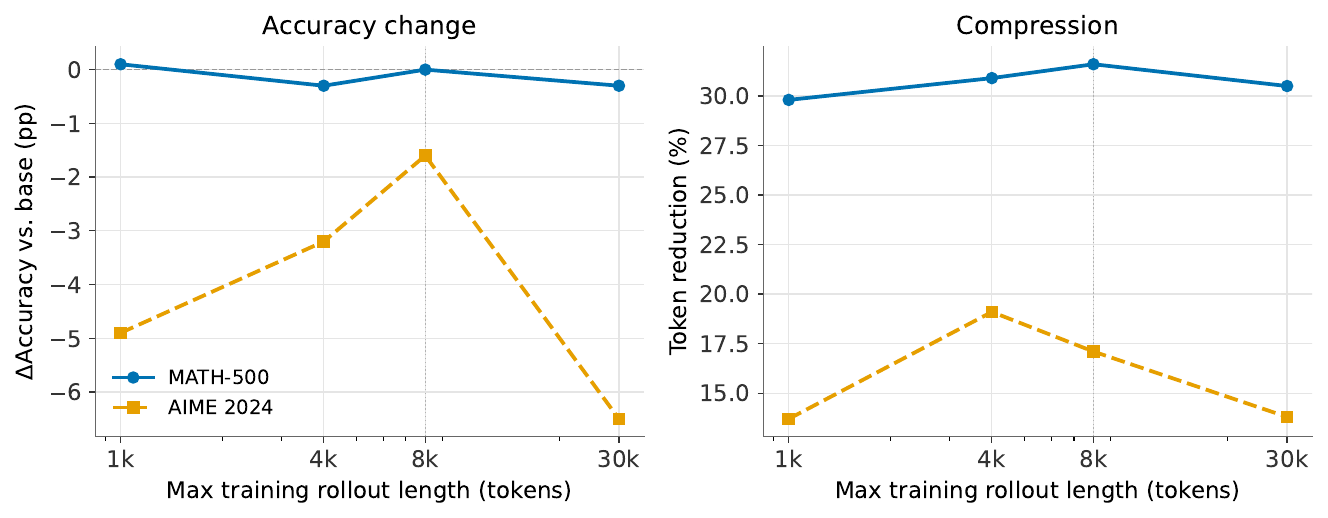}
\caption{\textbf{Training rollout-length ablation (Qwen3-8B, step 99, 30K eval budget).} Accuracy change ($\Delta$Acc vs.\ base mean@8, \emph{left}) and token reduction (\emph{right}) vs.\ the maximum \emph{training} rollout length on MATH-500 and AIME 2024; all variants are evaluated under the same 30K budget. Base model identical to Table~\ref{tab:main}. Even a 1k-token cap trains well; the $8$k default (dotted line) is the best operating point, while the longest 30k cap is not better.}
\label{fig:len_ablation}
\end{figure}

The result confirms that short training rollouts suffice. Even a $1$k-token cap, roughly a fifth of the base model's MATH-500 length, recovers $29.8\%$ MATH-500 compression with no accuracy loss, which validates the partial-rollout design. The $8$k default is the best setting: it gives the strongest MATH-500 compression ($31.6\%$) and the smallest AIME accuracy drop ($-1.6\%$), with AIME compression ($17.1\%$) on par with the $4$k cap. The longest cap ($30$k) is not better; its AIME accuracy drops most ($-6.5\%$) while compressing least ($13.8\%$), because unconstrained-length rollouts let the student reinforce its own verbosity before the teacher signal takes effect. This makes \method cheaper to train than outcome-reward RL, which needs full-length completions to compute a terminal reward.

\section{Limitations and Future Work}
\label{sec:limitations}

We discuss the scope of the current study and natural directions for future work.

\paragraph{Instruction-following as an enabler.} \method depends on the base model's ability to follow the conciseness instruction, since the teacher signal comes entirely from that instruction. Models that follow instructions well produce a clear teacher distribution and compress reliably, as we observe across Qwen3-8B, Qwen3-14B, and DeepSeek-R1-Distill-Llama-8B. Models with weak instruction-following would provide a weaker teacher signal. Characterizing the minimum instruction-following capability needed for effective self-distillation is a direction for future work.

\paragraph{Progressive compression dynamics.} The periodic teacher update (Eq.~\ref{eq:teacher_update}) pushes compression beyond a frozen teacher, and the method is robust across a range of moderate update intervals ($M \in \{30, 50\}$; Section~\ref{sec:ablation_M}). The compression signal on harder benchmarks such as AIME is weaker because the teacher itself requires more extensive reasoning. We view this as a feature of difficulty-adaptive compression (Section~\ref{sec:difficulty}) rather than a limitation.

\paragraph{Scope of evaluation.} This paper focuses on mathematical reasoning as a controlled testbed where accuracy can be verified precisely. The design of \method is domain-agnostic, since it requires only problem prompts and a conciseness instruction, so it applies to other reasoning domains such as code generation and scientific question answering where ground-truth verification is unavailable. The main properties of \method (no ground-truth requirement, difficulty adaptivity, and entropy preservation) are structural and do not depend on the evaluation domain. Extending the empirical evaluation to broader reasoning tasks is a natural next step.

\paragraph{Teacher quality characterization.} Our experiments consistently show that the conciseness-conditioned teacher improves accuracy (Table~\ref{tab:main}), and the theoretical analysis (Theorem~\ref{thm:accuracy}) provides formal bounds on accuracy preservation. A finer-grained characterization of when and why conciseness instructions improve versus degrade accuracy across different model families would further strengthen the understanding of self-distillation dynamics.

\section{Conclusion}
\label{sec:conclusion}

We presented \method, an on-policy self-distillation method that compresses reasoning by distilling a model's own concise behavior back into itself. The method uses a conciseness instruction to define a teacher and minimizes per-token reverse KL to that teacher on the student's own rollouts, without ground-truth answers, token budgets, or difficulty estimators. Across three model families, \method shortens reasoning traces substantially while preserving accuracy, and it improves accuracy when the base model has room to improve.

Two conclusions follow from these results. First, a large fraction of the tokens that reasoning models produce are redundant, and removing them preserves or improves accuracy rather than degrading it. Second, models already contain a concise reasoning mode that a conciseness instruction can elicit, and on-policy self-distillation can make that mode the default without reducing entropy (Appendix~\ref{app:entropy}) or general capability.

Because \method supervises only on a conciseness instruction and the model's own rollouts, it applies to domains where ground-truth answers or reliable verifiers are unavailable, provided the model can follow the instruction.


\bibliographystyle{abbrvnat}

\newpage
\appendix

\section{Theoretical Analysis}
\label{sec:theory_main}

We provide a formal analysis of \method's key properties: the connection between the per-token training loss and sequence-level divergence (Section~\ref{sec:kl_chain}), accuracy preservation guarantees (Section~\ref{sec:accuracy_bound}), formalization of difficulty-adaptive compression (Section~\ref{sec:difficulty_theory}), forgetting bounds relative to the base model (Section~\ref{sec:forgetting_bound}), and a probabilistic model of how compression reduces compounding errors (Section~\ref{sec:compounding_theory}).

\subsection{Sequence-Level Divergence and the Training Objective}
\label{sec:kl_chain}

We first establish that the per-token \method objective (Eq.~\ref{eq:loss}) is equivalent to minimizing the sequence-level KL divergence between student and teacher. This identification enables the application of standard information-theoretic tools to the per-token loss.

\begin{lemma}[Chain rule of KL for autoregressive models]
\label{lem:chain_rule}
For autoregressive distributions $q(y \mid x) = \prod_{t=1}^{|y|} q(y_t \mid x, y_{<t})$ and $p(y \mid x) = \prod_{t=1}^{|y|} p(y_t \mid x, y_{<t})$ over the same token vocabulary, the sequence-level KL divergence decomposes as:
\begin{equation}
D_\KL\big(q(\cdot \mid x) \;\big\|\; p(\cdot \mid x)\big) = \EE_{y \sim q}\left[\sum_{t=1}^{|y|} D_\KL\big(q(\cdot \mid x, y_{<t}) \;\big\|\; p(\cdot \mid x, y_{<t})\big)\right].
\end{equation}
\end{lemma}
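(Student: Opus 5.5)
We will prove the identity by expanding the sequence-level log-ratio with the autoregressive factorization and then using the tower property of conditional expectation, one position at a time. Throughout, we treat sequences as ending with an end-of-sequence token, so that $|y|$ is a stopping time determined by the prefix. We also assume $p(\cdot\mid x, y_{<t})$ puts positive mass wherever $q$ does; otherwise both sides are $+\infty$ and the identity holds trivially.

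\textbf{Step 1 (log-ratio decomposition).} By definition,
$D_\KL(q\|p)=\EE_{y\sim q}\bigl[\log q(y\mid x)-\log p(y\mid x)\bigr]$. Substituting the product forms gives
\begin{equation*}
\log\frac{q(y\mid x)}{p(y\mid x)}=\sum_{t=1}^{|y|}\log\frac{q(y_t\mid x,y_{<t})}{p(y_t\mid x,y_{<t})}.
\end{equation*}

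\textbf{Step 2 (conditioning on the prefix).} We rewrite the sum as $\sum_{t\ge1}\mathbf{1}[|y|\ge t]\,\ell_t(y_{\le t})$, where $\ell_t$ is the $t$-th log-ratio. The event $\{|y|\ge t\}$ is determined by $y_{<t}$: it is the event that no end-of-sequence token has appeared yet. Conditioning on $y_{<t}$, the next token satisfies $y_t\sim q(\cdot\mid x,y_{<t})$, so
\begin{equation*}
\EE\bigl[\ell_t\mid y_{<t}\bigr]=D_\KL\bigl(q(\cdot\mid x,y_{<t})\,\|\,p(\cdot\mid x,y_{<t})\bigr).
\end{equation*}
Taking outer expectations and summing over $t$ yields the right-hand side of the lemma.

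\textbf{Main obstacle.} The difficulty is exchanging the expectation with the sum when lengths are unbounded. If $|y|$ is capped at a maximum length $T$, as in practice, the sum is finite and linearity of expectation applies directly. In the general case, each conditional KL is nonnegative, so the tower step is justified term by term. Monotone convergence then applies to the right-hand side, whose terms are nonnegative.

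The left-hand side needs more care, because the individual log-ratio terms can be negative. We handle this by taking the truncated sequence KL up to time $T$. This is a finite sum, and it equals the KL between the laws of the stopped prefixes $y_{\le \min(|y|,T)}$. These prefix KLs increase to the full sequence KL as $T\to\infty$, by the monotonicity and lower semicontinuity of KL under refinement of $\sigma$-algebras. Passing to the limit on both sides completes the proof.
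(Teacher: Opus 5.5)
Your proposal is correct and follows the same route as the paper: you expand the sequence log-ratio via the autoregressive factorization into per-token log-ratios, then identify the expectation of each summand as the per-token KL at the sampled prefix. The paper does that last identification in one line, silently using the tower property and exchanging expectation with the sum. Your explicit conditioning on $y_{<t}$ (with $\{|y|\ge t\}$ determined by the prefix) and your truncation-and-limit argument for unbounded lengths make rigorous what the paper takes for granted.
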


\begin{proof}
By definition of KL divergence and the autoregressive factorization:
\begin{align}
D_\KL(q \| p) &= \EE_{y \sim q}\!\left[\log \frac{q(y \mid x)}{p(y \mid x)}\right] = \EE_{y \sim q}\!\left[\sum_{t=1}^{|y|} \log \frac{q(y_t \mid x, y_{<t})}{p(y_t \mid x, y_{<t})}\right] \\
&= \EE_{y \sim q}\!\left[\sum_{t=1}^{|y|} D_\KL\big(q(\cdot \mid x, y_{<t}) \| p(\cdot \mid x, y_{<t})\big)\right],
\end{align}
where the second equality uses $\log \prod_t = \sum_t \log$, and the last step recognizes each summand as a per-token KL divergence evaluated at the sampled prefix $y_{<t}$.
\end{proof}

\begin{corollary}
\label{cor:loss_equals_kl}
Identifying $q = \pi_\theta(\cdot \mid x)$ and $p = \pi_{\bar{\theta}}(\cdot \mid x, c)$, the \method training loss (Eq.~\ref{eq:loss}) equals the expected sequence-level KL divergence:
\begin{equation}
\Lcal(\theta) = \EE_{x \sim \D}\big[D_\KL\big(\pi_\theta(\cdot \mid x) \;\big\|\; \pi_{\bar{\theta}}(\cdot \mid x, c)\big)\big].
\end{equation}
\end{corollary}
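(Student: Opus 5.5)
The corollary follows by applying Lemma~\ref{lem:chain_rule} pointwise in $x$ and then averaging over $x \sim \D$. The plan is as follows.

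First, fix an input $x$ in the support of $\D$. Set $q(\cdot \mid x) := \pi_\theta(\cdot \mid x)$ and $p(\cdot \mid x) := \pi_{\bar\theta}(\cdot \mid x, c)$. Both are autoregressive distributions over the same vocabulary, because the teacher is the same network run on the augmented context $(x,c)$. Its next-token conditionals are $p(y_t \mid x, y_{<t}) = \pi_{\bar\theta}(y_t \mid x, c, y_{<t})$, and these factorize as $p(y\mid x)=\prod_t p(y_t\mid x,y_{<t})$. The conditioning on $c$ is absorbed into the definition of $p$; it is a fixed prefix and does not alter the chain-rule structure. Lemma~\ref{lem:chain_rule} then says that the inner bracket of Eq.~\ref{eq:loss}, taken in expectation over $y \sim \pi_\theta(\cdot\mid x)$, equals $D_\KL(\pi_\theta(\cdot\mid x)\,\|\,\pi_{\bar\theta}(\cdot\mid x,c))$.

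Second, rewrite the joint expectation in Eq.~\ref{eq:loss} as an iterated one, $\EE_{x\sim\D}\EE_{y\sim\pi_\theta(\cdot\mid x)}[\cdot]$, using the tower property. Every summand is a KL divergence and hence nonnegative, so Tonelli's theorem justifies the interchange with no integrability hypothesis; the value $+\infty$ is allowed on both sides. Substituting the pointwise identity from the first step yields the claim.

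The only point that needs care is variable length $|y|$. Sequences terminate at an EOS token, or at the rollout cap in practice. Lemma~\ref{lem:chain_rule} already handles this if EOS is treated as an ordinary token whose emission ends the product: $q(y\mid x)$ and $p(y\mid x)$ are then distributions over finite strings, and the log-ratio telescopes over exactly the $|y|$ factors. Truncation at a cap is handled by regarding the cap as a deterministic EOS applied identically to both models. This adds zero KL contribution at the forced position.

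Note also that the stop-gradient on $\bar\theta$ plays no role in the identity. The corollary is an equality of values, not of gradients, so no further argument is needed.
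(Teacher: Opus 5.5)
Your proposal is correct and follows the paper's route: the paper gives no separate proof of the corollary beyond substituting $q=\pi_\theta(\cdot\mid x)$ and $p=\pi_{\bar\theta}(\cdot\mid x,c)$ into Lemma~\ref{lem:chain_rule}, which is what you do pointwise in $x$ before averaging over $x\sim\D$. Your remarks on nonnegativity and Tonelli, EOS-terminated variable lengths, and the rollout cap are sound refinements that the paper leaves implicit; under a cap, the identity holds between the capped sequence distributions, as your forced-EOS construction makes explicit.
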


\subsection{Accuracy Preservation under Compression}
\label{sec:accuracy_bound}

We show that if self-distillation converges (the training loss is small) and the concise teacher preserves accuracy, then the student's accuracy is guaranteed to remain close to the base model's.

\begin{definition}[Accuracy]
\label{def:accuracy}
For a problem distribution $\D$ with correct-answer sets $\{A(x)\}_{x \in \D}$, the accuracy of policy $\pi$ is $\mathrm{Acc}(\pi) = \EE_{x \sim \D}\big[\pi(A(x) \mid x)\big]$, where $\pi(A(x) \mid x) = \sum_{y \in A(x)} \pi(y \mid x)$.
\end{definition}

\begin{theorem}[Accuracy preservation]
\label{thm:accuracy}
Let $\pi_{\theta^*}$ denote the converged student with training loss $\Lcal(\theta^*) \leq \epsilon_\KL$. Suppose the concise teacher preserves accuracy relative to the base model:
\begin{equation}
\mathrm{Acc}\big(\pi_{\bar{\theta}}(\cdot \mid \cdot, c)\big) \geq \mathrm{Acc}(\pi_{\bar{\theta}}) - \epsilon_T.
\end{equation}
Then the student satisfies:
\begin{equation}
\label{eq:acc_bound}
\mathrm{Acc}(\pi_{\theta^*}) \geq \mathrm{Acc}(\pi_{\bar{\theta}}) - \epsilon_T - \sqrt{\frac{\epsilon_\KL}{2}}.
\end{equation}
\end{theorem}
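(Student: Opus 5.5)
The argument follows the sketch in Section~\ref{sec:theory_accuracy}: convert the converged loss into a per-prompt total-variation bound with Pinsker's inequality, apply that bound to the correctness event, average over prompts, and then attach the teacher-quality assumption.

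\textbf{Step 1 (loss to sequence-level KL).} By Corollary~\ref{cor:loss_equals_kl}, the hypothesis $\Lcal(\theta^*) \le \epsilon_\KL$ reads
\[
\EE_{x\sim\D}\big[\delta(x)\big] \le \epsilon_\KL, \qquad \delta(x) := D_\KL\big(\pi_{\theta^*}(\cdot\mid x)\,\|\,\pi_{\bar\theta}(\cdot\mid x,c)\big).
\]
Here $\bar\theta$ is the teacher in force at convergence. The bound $\mathrm{Acc}(\pi_{\bar\theta})$ in the statement refers to this same $\bar\theta$, so no further bookkeeping across refresh windows is needed.

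\textbf{Step 2 (Pinsker, pointwise).} For each $x$, Pinsker's inequality on the sequence space gives
\[
d_{\mathrm{TV}}\big(\pi_{\theta^*}(\cdot\mid x),\,\pi_{\bar\theta}(\cdot\mid x,c)\big) \le \sqrt{\delta(x)/2}.
\]
Total variation bounds the probability gap of any event. Applying this to the correct-answer set $A(x)$ gives
\[
\pi_{\theta^*}(A(x)\mid x) \ge \pi_{\bar\theta}(A(x)\mid x,c) - \sqrt{\delta(x)/2}.
\]

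\textbf{Step 3 (averaging).} Taking $\EE_x$ and using Definition~\ref{def:accuracy}, we get $\mathrm{Acc}(\pi_{\theta^*}) \ge \mathrm{Acc}(\pi_{\bar\theta}(\cdot\mid\cdot,c)) - \EE_x\sqrt{\delta(x)/2}$. Since $u\mapsto\sqrt{u/2}$ is concave, Jensen's inequality gives
\[
\EE_x\sqrt{\delta(x)/2} \le \sqrt{\EE_x\delta(x)/2} \le \sqrt{\epsilon_\KL/2}.
\]
This step is the only real subtlety. The loss controls the \emph{average} KL, not a worst-case KL, so it is concavity that allows the square root to be taken outside the expectation. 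The alternative order, applying Pinsker to a prompt-averaged joint distribution over $(x,y)$, yields the same bound, and I would remark that the two routes agree.

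\textbf{Step 4 (teacher quality).} Substituting the hypothesis $\mathrm{Acc}(\pi_{\bar\theta}(\cdot\mid\cdot,c)) \ge \mathrm{Acc}(\pi_{\bar\theta}) - \epsilon_T$ gives Eq.~\eqref{eq:acc_bound}.

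Each step is a one-line inequality, so the proof will be short.
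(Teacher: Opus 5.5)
Your proposal is correct and follows essentially the same route as the paper's proof. Both arguments use Corollary~\ref{cor:loss_equals_kl} for the averaged KL bound, then pointwise Pinsker with Jensen to reach $\sqrt{\epsilon_\KL/2}$, then the event bound on $A(x)$, then the teacher-quality hypothesis. The only difference is the order of two steps: the paper averages $d_\mathrm{TV}$ first and then applies a two-sided event bound via $|\EE[f]| \le \EE[|f|]$, while you apply a one-sided event bound for each $x$ and average afterwards, which gives the identical result.
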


\begin{proof}
By Corollary~\ref{cor:loss_equals_kl}, we have $\EE_{x \sim \D}\big[D_\KL(\pi_{\theta^*}(\cdot \mid x) \| \pi_{\bar{\theta}}(\cdot \mid x, c))\big] \leq \epsilon_\KL$.

\textit{Step 1: KL to total variation.} For each problem $x$, Pinsker's inequality gives:
\begin{equation}
d_\mathrm{TV}\big(\pi_{\theta^*}(\cdot \mid x),\; \pi_{\bar{\theta}}(\cdot \mid x, c)\big) \leq \sqrt{\tfrac{1}{2}\, D_\KL\big(\pi_{\theta^*}(\cdot \mid x) \| \pi_{\bar{\theta}}(\cdot \mid x, c)\big)}.
\end{equation}
Taking expectations over $x \sim \D$ and applying Jensen's inequality (using concavity of $\sqrt{\cdot}$):
\begin{equation}
\label{eq:tv_bound}
\EE_x\big[d_\mathrm{TV}\big(\pi_{\theta^*}(\cdot \mid x),\; \pi_{\bar{\theta}}(\cdot \mid x, c)\big)\big] \leq \sqrt{\tfrac{1}{2}\,\EE_x\!\big[D_\KL\big(\pi_{\theta^*}(\cdot \mid x) \| \pi_{\bar{\theta}}(\cdot \mid x, c)\big)\big]} \leq \sqrt{\tfrac{\epsilon_\KL}{2}}.
\end{equation}

\textit{Step 2: Total variation to accuracy.} Since total variation bounds the difference in probability of any event, in particular the correctness event $A(x)$:
\begin{equation}
\big|\pi_{\theta^*}(A(x) \mid x) - \pi_{\bar{\theta}}(A(x) \mid x, c)\big| \leq d_\mathrm{TV}\big(\pi_{\theta^*}(\cdot \mid x),\; \pi_{\bar{\theta}}(\cdot \mid x, c)\big).
\end{equation}
Taking expectations over $x \sim \D$ and using $|\EE[f]| \leq \EE[|f|]$:
\begin{equation}
\big|\mathrm{Acc}(\pi_{\theta^*}) - \mathrm{Acc}\big(\pi_{\bar{\theta}}(\cdot \mid \cdot, c)\big)\big| \leq \EE_x\big[d_\mathrm{TV}\big(\pi_{\theta^*}(\cdot \mid x),\; \pi_{\bar{\theta}}(\cdot \mid x, c)\big)\big] \leq \sqrt{\tfrac{\epsilon_\KL}{2}}.
\end{equation}

\textit{Step 3: Combine with teacher quality.} From the assumption $\mathrm{Acc}(\pi_{\bar{\theta}}(\cdot \mid \cdot, c)) \geq \mathrm{Acc}(\pi_{\bar{\theta}}) - \epsilon_T$:
\begin{align}
\mathrm{Acc}(\pi_{\theta^*}) &\geq \mathrm{Acc}\big(\pi_{\bar{\theta}}(\cdot \mid \cdot, c)\big) - \sqrt{\tfrac{\epsilon_\KL}{2}} \geq \mathrm{Acc}(\pi_{\bar{\theta}}) - \epsilon_T - \sqrt{\tfrac{\epsilon_\KL}{2}}. \qedhere
\end{align}
\end{proof}

\begin{remark}[When the bound is vacuous, and why that is informative]
\label{rem:vacuous}
Theorem~\ref{thm:accuracy} identifies two independent sources of potential accuracy loss: the teacher accuracy gap $\epsilon_T$ and the student--teacher divergence $\sqrt{\epsilon_\KL/2}$. In our experiments, $\epsilon_T$ is consistently \emph{negative} (the concise teacher is more accurate than the base model), making the bound vacuous in the traditional sense. This is informative rather than a weakness: it reveals why \method improves accuracy. The bound becomes $\mathrm{Acc}(\pi_{\theta^*}) \geq \mathrm{Acc}(\pi_{\bar{\theta}}) + |\epsilon_T| - \sqrt{\epsilon_\KL/2}$, so accuracy improves whenever the teacher's accuracy gain exceeds the distillation gap. The theorem is most useful in the regime of aggressive compression where $\epsilon_T$ might turn positive; it then quantifies the worst-case accuracy degradation.
\end{remark}

\subsection{Difficulty-Adaptive Compression}
\label{sec:difficulty_theory}

We formalize the empirical observation (Table~\ref{tab:main}) that \method compresses easy problems aggressively while preserving reasoning on hard problems.

\begin{definition}[Essential and compressible tokens]
\label{def:essential}
For problem $x$ and student rollout $y \sim \pi_\theta(\cdot \mid x)$, classify each token position $t$ based on the implicit reward sign (Theorem~\ref{thm:reward}):
\begin{align}
\mathcal{E}(x, y) &= \big\{t : \pi_{\bar{\theta}}(y_t \mid x, c, y_{<t}) \geq \pi_\theta(y_t \mid x, y_{<t})\big\} & &\text{(essential: } r(y_t, x) \geq 0\text{)}, \\
\mathcal{C}(x, y) &= \big\{t : \pi_{\bar{\theta}}(y_t \mid x, c, y_{<t}) < \pi_\theta(y_t \mid x, y_{<t})\big\} & &\text{(compressible: } r(y_t, x) < 0\text{)}.
\end{align}
\end{definition}

\begin{proposition}[Difficulty-adaptive compression signal]
\label{prop:difficulty}
Let $d(x) \in [0, 1]$ denote problem difficulty, defined as the base model's failure rate $d(x) = 1 - \pi_{\theta_0}(A(x) \mid x)$. Assume:
\begin{itemize}[leftmargin=*, itemsep=2pt]
    \item[\textup{(A1)}] \textbf{Essential fraction increases with difficulty:} The expected fraction of essential tokens $\rho(x) \coloneqq \EE_{y}[|\mathcal{E}(x, y)| / |y|]$ is non-decreasing in $d(x)$.
    \item[\textup{(A2)}] \textbf{Category-level KL is problem-independent:} There exist constants $D_\mathcal{E}, D_\mathcal{C} > 0$ such that for all problems $x$:
    \begin{align}
    \EE\big[D_\KL\big(\pi_\theta(\cdot \mid x, y_{<t}) \| \pi_{\bar{\theta}}(\cdot \mid x, c, y_{<t})\big) \,\big|\, t \in \mathcal{C}(x, y)\big] &= D_\mathcal{C}, \\
    \EE\big[D_\KL\big(\pi_\theta(\cdot \mid x, y_{<t}) \| \pi_{\bar{\theta}}(\cdot \mid x, c, y_{<t})\big) \,\big|\, t \in \mathcal{E}(x, y)\big] &= D_\mathcal{E}.
    \end{align}
    \item[\textup{(A3)}] \textbf{Compressible tokens carry strictly larger KL:} $D_\mathcal{C} > D_\mathcal{E}$.
\end{itemize}
Then the expected normalized compression signal
\begin{equation}
S(x) = \EE_{y \sim \pi_\theta(\cdot \mid x)}\!\left[\frac{1}{|y|}\sum_{t=1}^{|y|} D_\KL\big(\pi_\theta(\cdot \mid x, y_{<t}) \| \pi_{\bar{\theta}}(\cdot \mid x, c, y_{<t})\big)\right]
\end{equation}
is non-increasing in $d(x)$.
\end{proposition}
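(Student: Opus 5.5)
The plan is to write $S(x)$ as a mixture of the two category-level KL constants, with weights given by the essential fraction $\rho(x)$. Then (A1) and (A3) give monotonicity directly. Write $k_t \coloneqq D_\KL(\pi_\theta(\cdot \mid x, y_{<t}) \| \pi_{\bar{\theta}}(\cdot \mid x, c, y_{<t}))$. By Definition~\ref{def:essential}, every position $t$ lies in exactly one of $\mathcal{E}(x,y)$ and $\mathcal{C}(x,y)$. So for each rollout
\begin{equation}
\frac{1}{|y|}\sum_{t=1}^{|y|} k_t = \frac{1}{|y|}\sum_{t \in \mathcal{E}(x,y)} k_t + \frac{1}{|y|}\sum_{t \in \mathcal{C}(x,y)} k_t .
\end{equation}

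Next I take $\EE_{y \sim \pi_\theta(\cdot\mid x)}$ and replace each category sum by its conditional mean from (A2). The conditional expectation in (A2) has to be read as an average over positions, weighted by $1/|y|$ and taken under the rollout distribution. Concretely, I will interpret (A2) as
\begin{equation}
\EE_y\Big[\tfrac{1}{|y|}\textstyle\sum_{t\in\mathcal{C}(x,y)} k_t\Big] = D_\mathcal{C}\,\EE_y\Big[\tfrac{|\mathcal{C}(x,y)|}{|y|}\Big],
\end{equation}
and analogously for $\mathcal{E}$ with $D_\mathcal{E}$. I expect this step to be the main obstacle. The statement of (A2) is informal about which measure over $(y,t)$ the conditioning uses. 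If it instead meant uniform weighting over all tokens, the $1/|y|$ factor would correlate with category membership and the identity would fail. I would therefore state explicitly that (A2) is taken with respect to the length-normalized position measure $P(y,t) \propto \pi_\theta(y\mid x)/|y|$, since this is exactly the measure under which $S(x)$ is an expectation. Under that reading the identity is just the definition of conditional expectation.

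With $\EE_y[|\mathcal{C}|/|y|] = 1-\rho(x)$, this gives
\begin{equation}
S(x) = \rho(x) D_\mathcal{E} + (1-\rho(x)) D_\mathcal{C} = D_\mathcal{C} - \rho(x)\,(D_\mathcal{C}-D_\mathcal{E}).
\end{equation}
This is an affine function of $\rho(x)$ with slope $-(D_\mathcal{C}-D_\mathcal{E})$, which is strictly negative by (A3). Now take $x, x'$ with $d(x) \le d(x')$. Assumption (A1) gives $\rho(x) \le \rho(x')$, hence $S(x) \ge S(x')$. So $S$ is non-increasing in $d(x)$, and strictly decreasing along any pair where $\rho$ strictly increases.

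Two minor points remain. First, $\rho$ depends on $x$ only through $d(x)$ for the monotonicity claim to be meaningful; I will read (A1) as the pairwise statement above, which is all the argument needs. Second, the argument uses (A3) only to fix the sign of the slope, and positivity of $D_\mathcal{E}, D_\mathcal{C}$ is not needed.
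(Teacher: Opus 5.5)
Your proof is correct and matches the paper's argument: split the per-rollout normalized KL into essential and compressible parts, take expectations using (A2) to get $S(x) = D_\mathcal{C} - \rho(x)(D_\mathcal{C} - D_\mathcal{E})$, and conclude with (A3) and (A1). You also make explicit that (A2) must be read under the length-normalized position measure, which is the step the paper takes silently when it replaces the per-rollout category averages $\bar{D}_\mathcal{E}, \bar{D}_\mathcal{C}$ by the constants $D_\mathcal{E}, D_\mathcal{C}$.
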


\begin{proof}
Decompose the normalized KL into essential and compressible contributions. For a given rollout $y$:
\begin{equation}
\frac{1}{|y|}\sum_t D_\KL(q_t \| p_t) = \underbrace{\frac{|\mathcal{E}|}{|y|} \cdot \bar{D}_\mathcal{E}}_{\text{essential term}} + \underbrace{\frac{|\mathcal{C}|}{|y|} \cdot \bar{D}_\mathcal{C}}_{\text{compressible term}},
\end{equation}
where $q_t = \pi_\theta(\cdot \mid x, y_{<t})$, $p_t = \pi_{\bar{\theta}}(\cdot \mid x, c, y_{<t})$, and $\bar{D}_\mathcal{E}, \bar{D}_\mathcal{C}$ denote the average per-token KL on essential and compressible tokens in this rollout, respectively. Taking expectations over $y$ and applying assumption~(A2):
\begin{align}
S(x) &= \rho(x) \cdot D_\mathcal{E} + (1-\rho(x)) \cdot D_\mathcal{C} \\
&= D_\mathcal{C} - \rho(x) \cdot (D_\mathcal{C} - D_\mathcal{E}).
\end{align}
By~(A3), $D_\mathcal{C} - D_\mathcal{E} > 0$, so $S(x)$ is a strictly decreasing affine function of $\rho(x)$. Since $\rho(x)$ is non-decreasing in $d(x)$ by~(A1), $S(x)$ is non-increasing in $d(x)$.

\emph{Quantitatively}, for two problems with difficulties $d_1 < d_2$ (hence $\rho(x_1) \leq \rho(x_2)$ by A1):
\begin{equation}
S(x_1) - S(x_2) = \big(\rho(x_2) - \rho(x_1)\big) \cdot \big(D_\mathcal{C} - D_\mathcal{E}\big) \geq 0. \qedhere
\end{equation}
\end{proof}

\begin{remark}
Assumption~(A1), that harder problems have a larger fraction of essential reasoning steps, is the core structural assumption. It is empirically supported by the monotonic compression--difficulty relationship in Table~\ref{tab:main}: on Qwen3-14B, MATH-500 (up to 56\% compression) is compressed more than the harder AIME benchmarks (32--38\%). Assumption~(A2) posits that the average KL divergence on essential and compressible tokens depends on token \emph{category} (essential vs.\ compressible) rather than on the specific problem. This is a modeling simplification; in practice, per-token KL values vary across problems, and the proposition should be interpreted as holding for the category-averaged quantities. Assumption~(A3) states that compressible tokens exhibit a larger vocabulary-level KL divergence than essential tokens. Note that this does not follow directly from the per-token reward sign: the essential/compressible classification (Definition~\ref{def:essential}) is based on the probability of the \emph{sampled} token $y_t$, whereas the KL divergence $D_\KL(q_t \| p_t)$ is an expectation over the \emph{entire vocabulary} at position $t$. Rather, (A3) is a structural modeling assumption, motivated by the intuition that at positions where the teacher concentrates mass differently from the student (compressible positions), the full distributional divergence tends to be larger than at positions where both distributions agree on the dominant token (essential positions).
\end{remark}

\subsection{Bounded Forgetting from the Base Model}
\label{sec:forgetting_bound}

A central advantage of on-policy self-distillation over off-policy SFT is controlled divergence from the original model. We formalize this through the \emph{conciseness gap}.

\begin{definition}[Conciseness gap]
\label{def:conciseness_gap}
The conciseness gap of the base model $\pi_{\theta_0}$ under instruction $c$ on input $x$ is
\begin{equation}
\gamma(x) = d_\mathrm{TV}\big(\pi_{\theta_0}(\cdot \mid x),\; \pi_{\theta_0}(\cdot \mid x, c)\big).
\end{equation}
\end{definition}

\begin{proposition}[Bounded forgetting under on-policy self-distillation]
\label{prop:forgetting}
Consider the first teacher window where $\bar{\theta} = \theta_0$ (frozen teacher). If the converged \method loss satisfies $\Lcal(\theta^*) \leq \epsilon_\KL$, then:
\begin{equation}
\label{eq:forgetting}
\EE_{x \sim \D}\big[d_\mathrm{TV}\big(\pi_{\theta^*}(\cdot \mid x),\; \pi_{\theta_0}(\cdot \mid x)\big)\big] \leq \sqrt{\frac{\epsilon_\KL}{2}} + \EE_{x \sim \D}[\gamma(x)].
\end{equation}
For subsequent windows with periodic teacher update, the same bound holds with $\theta_0$ replaced by the teacher weights $\bar{\theta}$ at the start of that window. Moreover, $\gamma(x)$ is difficulty-adaptive: for hard problems where the conciseness instruction has little effect, $\gamma(x) \approx 0$, so forgetting is minimal.
\end{proposition}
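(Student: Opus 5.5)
The plan is a triangle-inequality argument for total variation, pivoting through the concise-teacher distribution $\pi_{\theta_0}(\cdot \mid x, c)$, with the same two tools used in Theorem~\ref{thm:accuracy}: Corollary~\ref{cor:loss_equals_kl} together with Pinsker's inequality. For each fixed $x$, $d_\mathrm{TV}$ is a metric on distributions over complete responses $y$, so
\begin{equation*}
d_\mathrm{TV}\big(\pi_{\theta^*}(\cdot \mid x), \pi_{\theta_0}(\cdot \mid x)\big) \leq d_\mathrm{TV}\big(\pi_{\theta^*}(\cdot \mid x), \pi_{\theta_0}(\cdot \mid x, c)\big) + d_\mathrm{TV}\big(\pi_{\theta_0}(\cdot \mid x, c), \pi_{\theta_0}(\cdot \mid x)\big).
\end{equation*}
By Definition~\ref{def:conciseness_gap}, the second term is exactly $\gamma(x)$, using the symmetry of $d_\mathrm{TV}$.

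For the first term, I use that in the first window $\bar{\theta} = \theta_0$. Corollary~\ref{cor:loss_equals_kl} then turns the hypothesis $\Lcal(\theta^*) \leq \epsilon_\KL$ into
\begin{equation*}
\EE_x\big[D_\KL\big(\pi_{\theta^*}(\cdot \mid x) \,\|\, \pi_{\theta_0}(\cdot \mid x, c)\big)\big] \leq \epsilon_\KL .
\end{equation*}
Applying Pinsker pointwise in $x$ and then Jensen (concavity of $\sqrt{\cdot}$) reproduces Eq.~\eqref{eq:tv_bound} with $\bar{\theta} = \theta_0$. This bounds the expectation of the first term by $\sqrt{\epsilon_\KL/2}$. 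Taking $\EE_{x\sim\D}$ of the pointwise triangle inequality and using linearity gives Eq.~\eqref{eq:forgetting}.

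The claim for later windows follows from the same argument verbatim, with $\theta_0$ replaced by the teacher weights $\bar{\theta}$ frozen at the start of the window. Within a window the teacher is fixed, so the corollary still applies, and $\gamma$ is reinterpreted as the gap $d_\mathrm{TV}(\pi_{\bar\theta}(\cdot\mid x), \pi_{\bar\theta}(\cdot\mid x,c))$. The proof must state this reinterpretation explicitly, because the literal $\gamma(x)$ of Definition~\ref{def:conciseness_gap} refers to $\theta_0$.

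The main obstacle is the final sentence, that $\gamma(x) \approx 0$ on hard problems. This does not follow from the stated hypotheses and is not a theorem. I would present it as an interpretive consequence, tied to the intuition behind (A1) in Proposition~\ref{prop:difficulty}: if the instruction $c$ barely changes the base model's output distribution on hard $x$, then $\gamma(x)$ is small by definition. I would note that no assumption in the statement forces this. The remaining subtlety is measure-theoretic: KL, TV, and Pinsker must be taken over distributions on finite (or truncated) token sequences, so the chain-rule identification holds. I would assume this, consistent with the finite rollout budget.
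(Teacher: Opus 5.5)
Your proposal is correct and follows the paper's proof essentially step for step. Both use the triangle inequality for $d_\mathrm{TV}$ pivoting through $\pi_{\theta_0}(\cdot \mid x, c)$, bound the student--teacher term via Corollary~\ref{cor:loss_equals_kl}, pointwise Pinsker and Jensen, and identify the remaining term as $\gamma(x)$. Your caveats match the paper as well: it justifies $\gamma(x)\approx 0$ on hard problems only informally, and its later-window claim implicitly needs $\gamma$ redefined with $\bar{\theta}$ in place of $\theta_0$, as you note.
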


\begin{proof}
By the triangle inequality for total variation distance:
\begin{equation}
d_\mathrm{TV}\big(\pi_{\theta^*}(\cdot \mid x),\; \pi_{\theta_0}(\cdot \mid x)\big) \leq \underbrace{d_\mathrm{TV}\big(\pi_{\theta^*}(\cdot \mid x),\; \pi_{\theta_0}(\cdot \mid x, c)\big)}_{\text{student--teacher gap}} + \underbrace{d_\mathrm{TV}\big(\pi_{\theta_0}(\cdot \mid x, c),\; \pi_{\theta_0}(\cdot \mid x)\big)}_{\gamma(x)}.
\end{equation}
The student--teacher gap is bounded via Pinsker's inequality. By Corollary~\ref{cor:loss_equals_kl}, $\EE_x[D_\KL(\pi_{\theta^*}(\cdot \mid x) \| \pi_{\theta_0}(\cdot \mid x, c))] \leq \epsilon_\KL$. Applying Pinsker to each $x$ and Jensen's inequality over $x \sim \D$:
\begin{equation}
\EE_x\big[d_\mathrm{TV}\big(\pi_{\theta^*}(\cdot \mid x),\; \pi_{\theta_0}(\cdot \mid x, c)\big)\big] \leq \sqrt{\tfrac{\epsilon_\KL}{2}}.
\end{equation}
Taking expectations on both sides of the triangle inequality yields~\eqref{eq:forgetting}.

For the difficulty-adaptive claim: on hard problems, the conciseness instruction cannot substantially alter the output distribution because most reasoning steps are essential. Thus $\pi_{\theta_0}(\cdot \mid x, c) \approx \pi_{\theta_0}(\cdot \mid x)$, giving $\gamma(x) \approx 0$.
\end{proof}

\begin{remark}[Comparison with off-policy SFT]
\label{rem:forgetting_comparison}
Standard off-policy SFT minimizes $-\EE_{(x,y) \sim \D_T}[\log \pi_\theta(y \mid x)]$ on a fixed teacher dataset $\D_T$. The analogous forgetting decomposition is:
\begin{equation}
d_\mathrm{TV}\big(\pi_{\theta_\mathrm{SFT}}(\cdot \mid x),\; \pi_{\theta_0}(\cdot \mid x)\big) \leq d_\mathrm{TV}\big(\pi_{\theta_\mathrm{SFT}}(\cdot \mid x),\; \D_T(\cdot \mid x)\big) + d_\mathrm{TV}\big(\D_T(\cdot \mid x),\; \pi_{\theta_0}(\cdot \mid x)\big).
\end{equation}
The second term measures the \emph{distribution mismatch} between the teacher's data and the base model's outputs, which can be substantially larger than the conciseness gap $\gamma(x)$, particularly when the teacher generates qualitatively different reasoning styles. Crucially, off-policy SFT drives $\pi_{\theta_\mathrm{SFT}}$ toward $\D_T$ directly, while on-policy distillation generates data from the student's own evolving distribution, inherently limiting divergence from the base model~\citep{shenfeld2025sdft}.
\end{remark}

\subsection{Compression Reduces Compounding Error}
\label{sec:compounding_theory}
We provide a simple probabilistic model that explains the most striking empirical finding: shorter reasoning traces can \emph{improve} accuracy rather than degrade it.

\begin{proposition}[Shorter traces reduce error accumulation]
\label{prop:compounding}
Consider an autoregressive reasoning model where each token position independently introduces a reasoning error (an incorrect intermediate step that corrupts the final answer) with probability $p_\mathrm{err} \in (0, 1)$. For a trace of length $L$, the probability of producing a correct answer is:
\begin{equation}
\mathrm{Acc}(L) = (1 - p_\mathrm{err})^L.
\end{equation}
If compression reduces the trace from $L$ to $\alpha L$ tokens ($\alpha \in (0,1)$) without increasing the per-token error rate, the accuracy ratio satisfies:
\begin{equation}
\label{eq:compounding}
\frac{\mathrm{Acc}(\alpha L)}{\mathrm{Acc}(L)} = (1 - p_\mathrm{err})^{-(1-\alpha)L} \geq 1 + (1-\alpha)L \cdot p_\mathrm{err}.
\end{equation}
The accuracy improvement grows \emph{exponentially} in the number of removed tokens $(1-\alpha)L$.
\end{proposition}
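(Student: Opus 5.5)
The statement is elementary, so the plan is a direct computation followed by two standard scalar inequalities. Throughout, $q \coloneqq 1-p_\mathrm{err} \in (0,1)$.

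\textbf{Step 1: the accuracy formula.} Under the model, the answer is correct exactly when none of the $L$ positions introduces an error. These $L$ events are independent, and each has probability $q$ of no error. Taking the product gives $\mathrm{Acc}(L) = q^L$. The same argument gives $\mathrm{Acc}(\alpha L) = q^{\alpha L}$, using the hypothesis that compression does not raise the per-token error rate. When $\alpha L$ is not an integer, I will read $\mathrm{Acc}(\cdot)$ as the real-exponent extension $q^{s}$, since the claimed identity is stated in that form.

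\textbf{Step 2: the exact ratio.} Dividing,
\begin{equation*}
\frac{\mathrm{Acc}(\alpha L)}{\mathrm{Acc}(L)} = q^{\alpha L - L} = (1-p_\mathrm{err})^{-(1-\alpha)L}.
\end{equation*}
Write $m = (1-\alpha)L > 0$ for the number of removed tokens. The ratio equals $\exp\bigl(m \cdot (-\ln(1-p_\mathrm{err}))\bigr)$, and the rate $-\ln(1-p_\mathrm{err})$ is strictly positive. This is the precise sense of ``grows exponentially in $(1-\alpha)L$''.

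\textbf{Step 3: the linear lower bound.} I use two facts.
\begin{itemize}
\item $\ln(1-p) \le -p$ for $p\in(0,1)$. This follows from concavity of $\ln$, i.e.\ its tangent line at $1$.
\item $e^u \ge 1+u$ for all real $u$.
\end{itemize}
The first gives $-\ln(1-p_\mathrm{err}) \ge p_\mathrm{err}$. Multiplying by $m>0$ and using that $\exp$ is monotone, the ratio is at least $e^{m p_\mathrm{err}}$. The second fact then gives $e^{m p_\mathrm{err}} \ge 1 + (1-\alpha)L\,p_\mathrm{err}$.

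Alternatively, Bernoulli's inequality $(1-p)^{-m} \ge 1+mp$ for real $m \ge 1$ would work, but the exponential route avoids any restriction on $m$.

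\textbf{Main obstacle.} There is no real difficulty here. The only care needed is with signs: $m>0$, the exponent $-m$ is negative, and $\ln(1-p_\mathrm{err})<0$. Each inequality must be applied in the correct direction, and the non-integer $\alpha L$ case needs the real-exponent convention noted in Step 1.
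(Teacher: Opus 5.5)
Your proposal is correct and follows the paper's proof step for step: you obtain the exact ratio $(1-p_\mathrm{err})^{-(1-\alpha)L}$ by direct division, then apply $\ln(1-p)\le -p$ followed by $e^u\ge 1+u$ to get the linear lower bound. One small aside on your alternative route: Bernoulli's inequality with the nonpositive exponent $-m$ already gives $(1-p)^{-m}\ge 1+mp$ for every $m>0$, so the restriction $m\ge 1$ you attach to it is unnecessary.
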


\begin{proof}
Direct computation gives the exact ratio:
\begin{equation}
\frac{\mathrm{Acc}(\alpha L)}{\mathrm{Acc}(L)} = \frac{(1-p_\mathrm{err})^{\alpha L}}{(1-p_\mathrm{err})^L} = (1-p_\mathrm{err})^{-(1-\alpha)L}.
\end{equation}
Let $m = (1-\alpha)L > 0$. Since $\ln(1-p) \leq -p$ for all $p \in (0,1)$ (which follows from the concavity of $\ln$ and the tangent line at $p=0$), we have $-\ln(1-p_\mathrm{err}) \geq p_\mathrm{err}$, giving:
\begin{equation}
(1-p_\mathrm{err})^{-m} = e^{-m \ln(1-p_\mathrm{err})} \geq e^{m \cdot p_\mathrm{err}} \geq 1 + m \cdot p_\mathrm{err},
\end{equation}
where the final inequality is $e^u \geq 1 + u$ for all $u \geq 0$.
\end{proof}

\begin{remark}[Quantitative interpretation]
On MATH-500 (Qwen3-14B), the base model generates $L \approx 4{,}139$ tokens and \method compresses to $\alpha \approx 0.44$, removing $(1-\alpha)L \approx 2{,}330$ tokens. Even with a modest per-token error rate of $p_\mathrm{err} = 10^{-4}$, the linear lower bound gives an accuracy ratio $\geq 1.23$, a 23\% relative improvement. The exponential term $(1-p_\mathrm{err})^{-2330} \approx e^{0.23} \approx 1.26$ provides the tighter bound. On AIME benchmarks where $L \approx 13{,}000$ tokens, the same per-token error rate yields even larger potential gains per token removed.
\end{remark}

\begin{remark}[Conservative nature of the independence assumption]
The independence assumption is conservative: in practice, reasoning errors compound because one incorrect intermediate step causes subsequent steps to build on a false premise, amplifying the error. Under positively correlated errors, the benefit of removing error-prone tokens exceeds the independent-error prediction, making Proposition~\ref{prop:compounding} a \emph{lower bound} on the improvement from compression. This is consistent with the empirical accuracy gains where the base model has room to improve (e.g., $71.3{\to}82.1$ on MATH-500 for DeepSeek-R1-Distill-Llama-8B; Table~\ref{tab:main}), which exceed what the simple independent-error model predicts.
\end{remark}

\section{Survey of Reasoning Compression Methods}
\label{app:survey}

Table~\ref{tab:survey} summarizes 13 reasoning compression methods along four axes. This survey motivates the design of \method by revealing the pervasive dependence on ground-truth answers and the rarity of difficulty-adaptive methods.

\begin{table}[h]
\centering
\caption{Survey of 13 reasoning compression methods. LP = Length Penalty in reward; DD = Difficulty-Dependent; CA = Correct Answer required; HB = Hard Budget.}
\label{tab:survey}
\small
\begin{tabular}{@{}l cccc l@{}}
\toprule
\textbf{Method} & \textbf{LP} & \textbf{DD} & \textbf{CA} & \textbf{HB} & \textbf{Approach} \\
\midrule
L1 & \checkmark & & \checkmark & \checkmark & RL \\
DiPO & \checkmark & \checkmark & \checkmark & & RL \\
DIET & \checkmark & \checkmark & \checkmark & & RL \\
DLER & \checkmark & \checkmark & \checkmark & & RL \\
Leash & \checkmark & & \checkmark & & RL \\
\midrule
SEER & & & \checkmark & & SFT \\
TokenSkip & & & \checkmark & \checkmark & SFT \\
S3-CoT & & & & & Steering \\
DAP/LiteCoT & & \checkmark & \checkmark & & SFT \\
Chain of Draft & & & & & Prompt \\
TrimR & & & & & Inference \\
NoWait & & & & & Inference \\
FlowSteer & & & & & Inference \\
\midrule
\textbf{\method (Ours)} & & \checkmark & & & Self-distill \\
\bottomrule
\end{tabular}
\end{table}

\paragraph{Comparability.} We do not tabulate the reported accuracy/token-reduction numbers of these methods side by side, because they are not directly comparable: with the sole exception of NoWait, no cited method uses any of our base models (Qwen3-8B/14B, DeepSeek-R1-Distill-Llama-8B), and papers differ in benchmark splits (full MATH vs.\ MATH-500, composite AIME sets, MMLU-Pro vs.\ MMLU, non-Diamond GPQA) and evaluation protocols. NoWait is the only baseline evaluated on a \method base model (Qwen3-8B); as reported in its paper, it reduces tokens but loses accuracy on the hard sets (AIME 2025 $74.6{\to}60.0$), whereas \method compresses while preserving accuracy.



\section{Full Main Results}
\label{app:full_main}

Table~\ref{tab:main_full} gives the complete version of Table~\ref{tab:main}: it adds the average
response length (Len) for every row and the inference-only ``concise prompt'' baselines, where the
conciseness instruction is prepended at inference with no training. The concise-prompt rows show that
prompting alone already shortens responses, and that \method extends this compression further while
better preserving accuracy.

\begin{table*}[t]
\centering
\caption{\textbf{Full main results (token budget = 30K).} Accuracy (Acc, mean@8, \%), average reasoning token length (Len), and token reduction relative to the base model (Red., \%; ``---'' marks the base reference). ``Concise prompt'' uses the conciseness instruction at inference only (no training); \method trains with periodic teacher update ($M{=}50$). v1 and v2 denote the uniform and difficulty-aware conciseness instructions.}
\label{tab:main_full}
\small
\setlength{\tabcolsep}{2.5pt}
\resizebox{\linewidth}{!}{%
\begin{tabular}{@{}l ccc ccc ccc ccc ccc@{}}
\toprule
& \multicolumn{3}{c}{\textbf{MATH-500}} & \multicolumn{3}{c}{\textbf{AIME 2024}} & \multicolumn{3}{c}{\textbf{AIME 2025}} & \multicolumn{3}{c}{\textbf{GPQA-D}} & \multicolumn{3}{c}{\textbf{MMLU}} \\
\cmidrule(lr){2-4} \cmidrule(lr){5-7} \cmidrule(lr){8-10} \cmidrule(lr){11-13} \cmidrule(lr){14-16}
\textbf{Method} & Acc & Len & Red. & Acc & Len & Red. & Acc & Len & Red. & Acc & Len & Red. & Acc & Len & Red. \\
\midrule
\multicolumn{16}{l}{\textit{Qwen3-8B}} \\
Base Model & 95.7 & 4,884 & --- & \textbf{76.2} & 14,229 & --- & \textbf{70.4} & 16,492 & --- & \textbf{61.5} & 7,921 & --- & \textbf{81.9} & 1,633 & --- \\
Concise prompt (v2) & 94.2 & 3,833 & 21.5\% & 74.6 & 12,837 & 9.8\% & 63.7 & 15,622 & 5.3\% & 59.5 & 5,490 & 30.7\% & 82.8 & 1,195 & 26.8\% \\
Concise prompt (v1) & 95.6 & 2,983 & 38.9\% & 74.2 & 11,352 & 20.2\% & 62.1 & 14,194 & 13.9\% & 56.8 & 5,582 & 29.5\% & \textbf{83.0} & 1,195 & 26.8\% \\
\method~(v2) & \textbf{95.7} & 3,339 & 31.6\% & 75.0 & 11,799 & 17.1\% & 65.8 & 13,605 & 17.5\% & 58.3 & 6,558 & 17.2\% & 81.2 & 1,267 & 22.4\% \\
\method~(v1) & \textbf{95.7} & 2,107 & \textbf{56.9\%} & 72.9 & 9,549 & \textbf{32.9\%} & 58.8 & 11,804 & \textbf{28.4\%} & 58.5 & 5,056 & \textbf{36.2\%} & 80.9 & 903 & \textbf{44.7\%} \\
\midrule
\multicolumn{16}{l}{\textit{Qwen3-14B}} \\
Base Model & 93.0 & 4,139 & --- & 75.0 & 13,222 & --- & 69.2 & 15,622 & --- & \textbf{62.2} & 6,185 & --- & \textbf{85.1} & 1,030 & --- \\
Concise prompt (v2) & 94.3 & 3,077 & 25.7\% & 73.3 & 11,509 & 13.0\% & \textbf{71.7} & 14,038 & 10.1\% & 60.5 & 4,579 & 26.0\% & \textbf{84.9} & 801 & 22.2\% \\
Concise prompt (v1) & 95.9 & 2,354 & 43.1\% & \textbf{76.7} & 10,114 & 23.5\% & 66.2 & 12,478 & 20.1\% & 60.6 & 4,569 & 26.1\% & \textbf{84.9} & 805 & 21.8\% \\
\method~(v2) & 95.2 & 2,701 & 34.7\% & 75.0 & 10,615 & 19.7\% & 67.1 & 12,990 & 16.8\% & \textbf{62.0} & 4,905 & 20.7\% & 83.9 & 799 & 22.4\% \\
\method~(v1) & \textbf{96.3} & 1,808 & \textbf{56.3\%} & 73.8 & 8,261 & \textbf{37.5\%} & 62.9 & 10,611 & \textbf{32.1\%} & 61.9 & 3,727 & \textbf{39.7\%} & 84.2 & 586 & \textbf{43.1\%} \\
\midrule
\multicolumn{16}{l}{\textit{DeepSeek-R1-Distill-Llama-8B}} \\
Base Model & 71.3 & 3,313 & --- & 33.3 & 11,042 & --- & 25.0 & 11,983 & --- & 47.0 & 8,609 & --- & 71.5 & 1,923 & --- \\
Concise prompt (v2) & 79.7 & 2,635 & 20.5\% & 42.1 & 10,766 & 2.5\% & 28.8 & 11,522 & 3.8\% & 46.0 & 7,800 & 9.4\% & 73.9 & 1,746 & 9.2\% \\
Concise prompt (v1) & 80.8 & 2,482 & 25.1\% & \textbf{45.0} & 9,920 & \textbf{10.2\%} & \textbf{29.2} & 10,803 & \textbf{9.8\%} & 46.5 & 7,732 & 10.2\% & \textbf{74.1} & 1,746 & 9.2\% \\
\method~(v2) & 79.8 & 2,544 & 23.2\% & 42.1 & 11,317 & $-$2.5\% & 26.2 & 11,975 & 0.1\% & 46.7 & 8,006 & 7.0\% & 71.4 & 1,703 & 11.4\% \\
\method~(v1) & \textbf{82.1} & 2,267 & \textbf{31.6\%} & 39.2 & 10,351 & 6.3\% & 27.1 & 11,131 & 7.1\% & \textbf{48.3} & 7,731 & 10.2\% & 71.7 & 1,585 & \textbf{17.6\%} \\
\bottomrule
\end{tabular}%
}
\end{table*}

\section{Answer-Format Breakdown}
\label{app:answer_format}

Because a grader keyed to a single answer format can undercount accuracy, we score every response three ways (Table~\ref{tab:format}): \emph{answer-only} (an extracted ``Answer: $X$'' line), \emph{boxed-only} (a literal ``$\backslash$boxed\{$\cdot$\}''), and \emph{dual} (correct under either; the scorer used throughout, Section~\ref{sec:setup}). The base model splits correct answers across both formats (MATH-500: 57.7\% answer-only, 54.5\% boxed-only, 93.0\% dual), so no single-format grader captures its accuracy. After \method the model consolidates onto the ``Answer:'' format: answer-only accuracy rises to 92.8\% while boxed-only drops to 10.0\%, yet dual accuracy still improves to 95.2\%. Compression thus removes the redundant post-\texttt{</think>} boxed restatement rather than correctness. The single-format columns sum to more than dual because some responses are correct in both formats; the dual scorer counts this overlap once, i.e.\ $\text{dual}=\text{answer-only}+\text{boxed-only}-\text{both}$. This overlap shrinks from 19.2\% (base) to 7.5\% (\method) on MATH-500 as the formats become disjoint, which explains why a boxed-only grader understates accuracy for models that answer in the ``Answer:'' format.

\begin{table}[t]
\centering
\caption{\textbf{Answer-format breakdown of accuracy (Qwen3-14B, 30K budget, mean@8 \%).} Each response is scored three ways: answer-only (``Answer: $X$''), boxed-only (``$\backslash$boxed\{$\cdot$\}''), and dual (either). \method consolidates correct answers onto the ``Answer:'' format, so answer-only accuracy rises sharply while boxed-only drops, without losing dual accuracy. The dual column matches Table~\ref{tab:main}.}
\label{tab:format}
\small
\begin{tabular}{@{}l l ccc@{}}
\toprule
\textbf{Model} & \textbf{Benchmark} & \textbf{Answer-only} & \textbf{Boxed-only} & \textbf{Dual} \\
\midrule
\multirow{3}{*}{Base Model} & MATH-500 & 57.7 & 54.5 & 93.0 \\
 & AIME 2024 & 20.8 & 70.8 & 75.0 \\
 & AIME 2025 & 17.5 & 65.8 & 69.2 \\
\midrule
\multirow{3}{*}{\method} & MATH-500 & \textbf{92.8} & 10.0 & \textbf{95.2} \\
 & AIME 2024 & \textbf{49.2} & 36.7 & \textbf{75.0} \\
 & AIME 2025 & \textbf{37.5} & 34.6 & 67.1 \\
\bottomrule
\end{tabular}
\end{table}

\section{Training and Implementation Details}
\label{app:implementation}

\paragraph{Technical setup.}
All experiments are conducted on a single node equipped with eight NVIDIA H200 GPUs.
Our implementation is built on top of the \texttt{verl} library~\cite{sheng2024hybridflow}, which provides a HybridEngine for efficient actor--rollout--reference model co-location.
We use PyTorch Fully Sharded Data Parallel (FSDP) for distributed training with parameter and optimizer offloading to CPU, and SGLang~\cite{zheng2024sglang} for batched rollout generation.
Sequence parallelism (Ulysses, degree~4) is enabled during training to handle long sequences efficiently, while tensor parallelism (degree~2) is used for inference.
Mixed-precision training is performed in \texttt{bfloat16}, and gradient checkpointing is enabled to reduce peak memory usage.

\paragraph{Training data.}
Our training data is derived from DAPO-Math-17k~\cite{yu2025dapo}, a deduplicated set of ${\sim}17{,}000$ competition-level math problems.
We randomly split the dataset into 80\% training (${\sim}13{,}600$ prompts) and 20\% validation (${\sim}3{,}400$ prompts) with a fixed seed for reproducibility across all configurations.
For each problem, we construct a \emph{student prompt} (the original question) and a \emph{teacher prompt} (the question prepended with a conciseness instruction).

\paragraph{Training procedure.}
At each training step, the student model generates a response from the student prompt via SGLang sampling (temperature 1.0, top-$p$ 1.0).
We then perform a single gradient update minimizing the reverse KL divergence between student and teacher logit distributions over the student's own generated tokens.
All student rollouts are used for training regardless of correctness; no filtering is applied.
Both teacher and student forward passes are performed for each micro-batch with chunked logit processing (chunk size 256 tokens) to bound peak GPU memory; teacher logits are progressively freed after each chunk.

\paragraph{Hyperparameters.}
Table~\ref{tab:hyperparameters} summarizes the full configuration, which is shared across all models and instruction variants.

\begin{table}[ht]
\renewcommand{\arraystretch}{1.15}
\centering
\setlength{\tabcolsep}{6pt}
\small
\begin{tabular}{ll}
    \toprule
    \textbf{Parameter} & \textbf{Value} \\
    \midrule
    \textbf{General} & \\
    Models & Qwen3-8B, Qwen3-14B, DeepSeek-R1-Distill-Llama-8B \\
    Loss function & Reverse KL: $\KL(\pi_\text{student} \| \pi_\text{teacher})$ \\
    Teacher & Periodic update ($M{=}50$ steps) \\
    \midrule
    \textbf{Data} & \\
    Training prompts & ${\sim}$13,600 (from DAPO-Math-17k) \\
    Validation prompts & ${\sim}$3,400 \\
    Max prompt length & 1,024 tokens \\
    Max response length & 8,192 tokens (training) \\
    \midrule
    \textbf{Generation (student rollout)} & \\
    Inference engine & SGLang \\
    Temperature & 1.0 \\
    Top-$p$ & 1.0 \\
    Rollouts per prompt & 1 \\
    Max generation tokens & 9,216 \\
    \midrule
    \textbf{Evaluation} & \\
    Temperature & 0.6 \\
    Top-$p$ & 0.95 \\
    Top-$k$ & 20 \\
    Rollouts per prompt & 8 \\
    Max generation tokens & 30,000 \\
    Eval frequency & Every 20 steps \\
    \midrule
    \textbf{Training} & \\
    Optimizer & AdamW \\
    Learning rate & $1 \times 10^{-6}$ (constant) \\
    Weight decay & 0.01 \\
    Gradient clipping & 1.0 (max norm) \\
    Global batch size & 32 \\
    Micro-batch size per GPU & 2 \\
    Epochs & 1 \\
    Precision & bfloat16 \\
    \midrule
    \textbf{Infrastructure} & \\
    GPUs & $8 \times$ NVIDIA H200 \\
    Tensor parallelism (inference) & 2 \\
    Sequence parallelism (training) & Ulysses, degree 4 \\
    FSDP parameter offload & Enabled \\
    FSDP optimizer offload & Enabled \\
    Gradient checkpointing & Enabled \\
    \bottomrule
\end{tabular}
\caption{Hyperparameters for \method. The same configuration is used across all models and instruction variants.}
\label{tab:hyperparameters}
\end{table}

\paragraph{Evaluation.}
We evaluate every 20 steps on three held-out math benchmarks: MATH-500~\cite{hendrycks2021math}, AIME~2024, and AIME~2025.
For each benchmark, we generate 8 responses per problem with temperature 0.6, top-$p = 0.95$, and top-$k = 20$, and report mean accuracy (fraction of correct samples averaged over problems) and average response token count.
Correctness is determined by extracting the final answer and comparing against the ground truth using the symbolic and numeric equivalence checker from the \texttt{verl} library~\citep{sheng2024hybridflow}.\footnote{\url{https://github.com/verl-project/verl/blob/main/verl/utils/reward_score/math_dapo.py}}

\section{Alternative Teacher Parameterizations}
\label{app:teacher_variants}

The main paper uses a \emph{periodic teacher update} ($\bar{\theta} \leftarrow \theta$ every $M$ steps) that balances progressive compression with training stability. Here we discuss the design space of teacher parameterizations, from the most conservative to the most aggressive. An empirical comparison across update intervals $M \in \{1, 10, 30, 50, 100\}$ is provided in Section~\ref{sec:ablation_M}.

\paragraph{Frozen teacher ($M = \infty$).} The simplest variant fixes $\bar{\theta} = \theta_0$ for the entire training run. This provides a stable, non-shifting compression target and allows teacher prefill to be pre-computed once. However, the frozen teacher becomes an increasingly weak compression oracle as the student improves, limiting the maximum achievable compression.

\paragraph{Periodic teacher (our default, $M{=}50$).} Setting a finite update interval $M$ (Algorithm~\ref{alg:opsdc}) enables progressive compression: after each refresh, the updated teacher, having already internalized compression from the previous round, produces even more concise traces under instruction $c$, providing a stronger compression signal. The discrete refresh avoids continuous co-adaptation while still allowing compression to deepen over training. Our ablation (Section~\ref{sec:ablation_M}) shows that $M \in \{30, 50\}$ form a stable plateau, retaining base-level accuracy while still compressing substantially, confirming robustness to the exact interval within this range.

\paragraph{EMA teacher.} The teacher parameters are an exponential moving average of the student, updated after each gradient step:
\begin{equation}
    \bar{\theta} \leftarrow \alpha \bar{\theta} + (1-\alpha)\theta, \quad \alpha \in [0.99, 0.999].
\end{equation}
This provides a smooth, continuous version of progressive compression. With moderate decay ($\alpha = 0.995$), it can yield additional compression beyond the frozen teacher but requires careful monitoring for collapse.

\paragraph{Stop-gradient concurrent teacher ($M{=}1$).} The teacher uses the \emph{same} parameters $\theta$ as the student, with stop-gradient during the backward pass. This provides the most aggressive progressive compression but carries the highest risk of \emph{progressive compression collapse}: as the student becomes more concise, the teacher also becomes more concise, creating a positive feedback loop that can drive output length toward degenerate short sequences. Our ablation confirms this prediction: $M{=}1$ causes training collapse, with accuracy falling sharply and response length growing rather than shrinking (Figure~\ref{fig:m_ablation}), consistent with the moving-target instability identified by \citet{shenfeld2025sdft}.

\section{Effect of KL Divergence Direction in \method}
\label{app:kl_direction}

\subsection{Background and Motivation}

The \method distillation loss aligns the live student $p_S$ with the teacher $p_T$,
a periodically frozen snapshot of the student itself.
A natural question is which direction of KL divergence to use:

\begin{itemize}
  \item \textbf{Reverse KL} (our choice): $\mathrm{KL}(p_S \| p_T) =
    \sum_v p_S(v)\bigl[\log p_S(v) - \log p_T(v)\bigr]$.
    The gradient w.r.t.\ student parameters is weighted by the
    \emph{student's own} distribution $p_S(v)$: the student updates only
    in regions where it currently generates, providing built-in
    self-regularization against abrupt distribution shifts.
  \item \textbf{Forward KL} (baseline): $\mathrm{KL}(p_T \| p_S) =
    \sum_v p_T(v)\bigl[\log p_T(v) - \log p_S(v)\bigr]$.
    The gradient is weighted by the \emph{teacher's} distribution $p_T(v)$,
    fully decoupled from the student's current state.
  \item \textbf{Jensen--Shannon divergence} (JSD): the symmetric mixture
    $\tfrac{1}{2}\mathrm{KL}(p_S \| m) + \tfrac{1}{2}\mathrm{KL}(p_T \| m)$
    with $m = \tfrac{1}{2}(p_S + p_T)$. It interpolates between the two
    directions, so its gradient is partly student-weighted and partly
    teacher-weighted.
\end{itemize}

Forward KL is often used in offline distillation, where the teacher is a fixed
external model whose distribution is a reliable target to match. In \method the
teacher is instead a stale copy of the student, refreshed every $M$ steps. We
argue that this makes forward KL ill-suited for \method: because its gradient is
weighted by $p_T$ rather than $p_S$, an update can move probability mass into
regions the current student never visits, with a magnitude independent of how
far the student has drifted from the teacher.

\subsection{Experimental Setup}

We compare three divergences on Qwen3-8B: reverse KL, forward KL, and their
symmetric combination, the Jensen--Shannon divergence (JSD). All three use the
full-vocab setup of Table~\ref{tab:main} with the teacher refreshed every 50
steps, and differ only in the divergence. We track validation accuracy (mean@8)
and mean response length on MATH-500 and AIME 2024 every 20 steps.

\subsection{Results}

Reverse KL and JSD train stably and hold near-base accuracy with strong
compression, whereas forward KL collapses (Table~\ref{tab:kl_direction}): its
accuracy falls to single digits and its response length saturates the
30K-token budget. JSD compresses slightly less than reverse KL, so we adopt
reverse KL as the default.

\begin{table}[h]
\centering
\caption{\textbf{Divergence direction ablation (Qwen3-8B, step 99, 30K budget).} Accuracy (mean@8, \%) and average response length (tokens) on MATH-500 and AIME 2024. Forward KL collapses to degenerate maximum-length generation.}
\label{tab:kl_direction}
\small
\setlength{\tabcolsep}{6pt}
\begin{tabular}{@{}l cc cc@{}}
\toprule
& \multicolumn{2}{c}{\textbf{MATH-500}} & \multicolumn{2}{c}{\textbf{AIME 2024}} \\
\cmidrule(lr){2-3} \cmidrule(lr){4-5}
\textbf{Divergence} & Acc & Len & Acc & Len \\
\midrule
Reverse KL (ours) & \textbf{95.7} & 3{,}339 & \textbf{75.0} & 11{,}799 \\
JSD               & 94.8 & 3{,}281 & 71.7 & 11{,}856 \\
Forward KL        & 5.7 & 29{,}135 & 0.4 & 30{,}000 \\
\bottomrule
\end{tabular}
\end{table}

\begin{figure}[h]
\centering
\includegraphics[width=\linewidth]{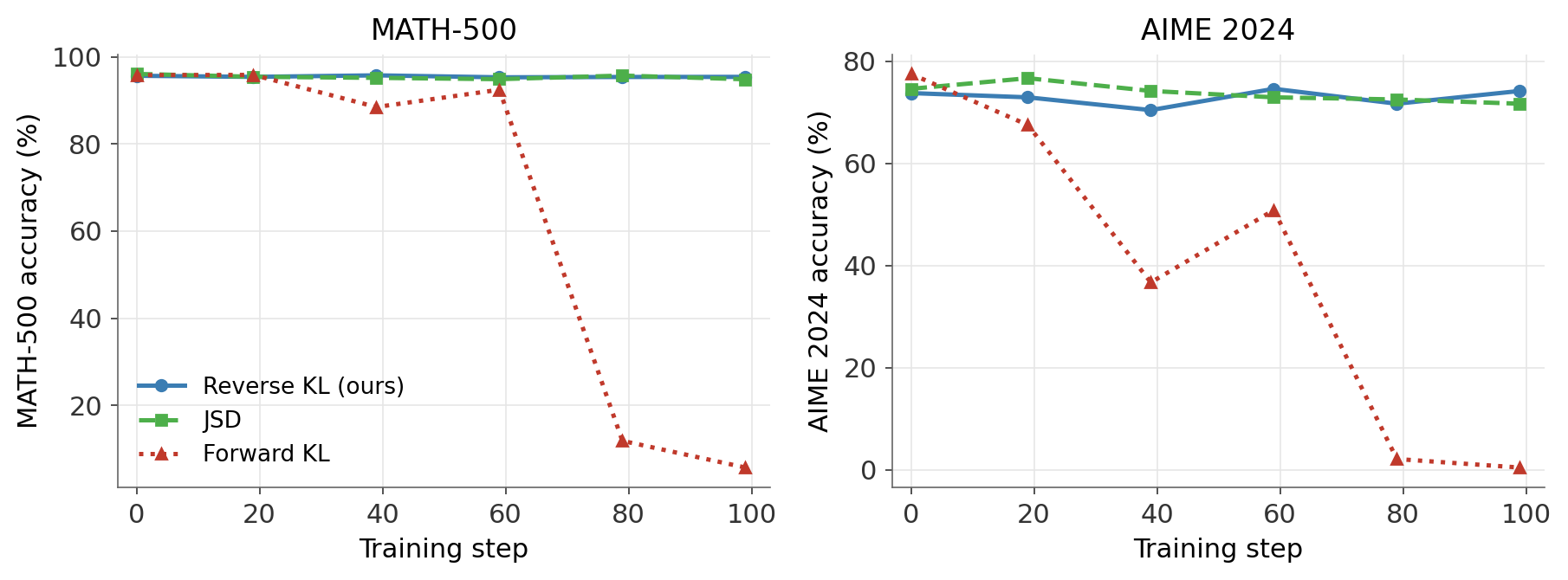}
\caption{Validation accuracy versus training step on Qwen3-8B for reverse KL
  (blue), JSD (green), and forward KL (red), on \textbf{a}, MATH-500 and
  \textbf{b}, AIME 2024. Reverse KL and JSD hold near-base accuracy; forward KL
  collapses to near zero within the first ${\sim}100$ steps.}
\label{fig:kl_comparison}
\end{figure}

\begin{figure}[h]
\centering
\includegraphics[width=\linewidth]{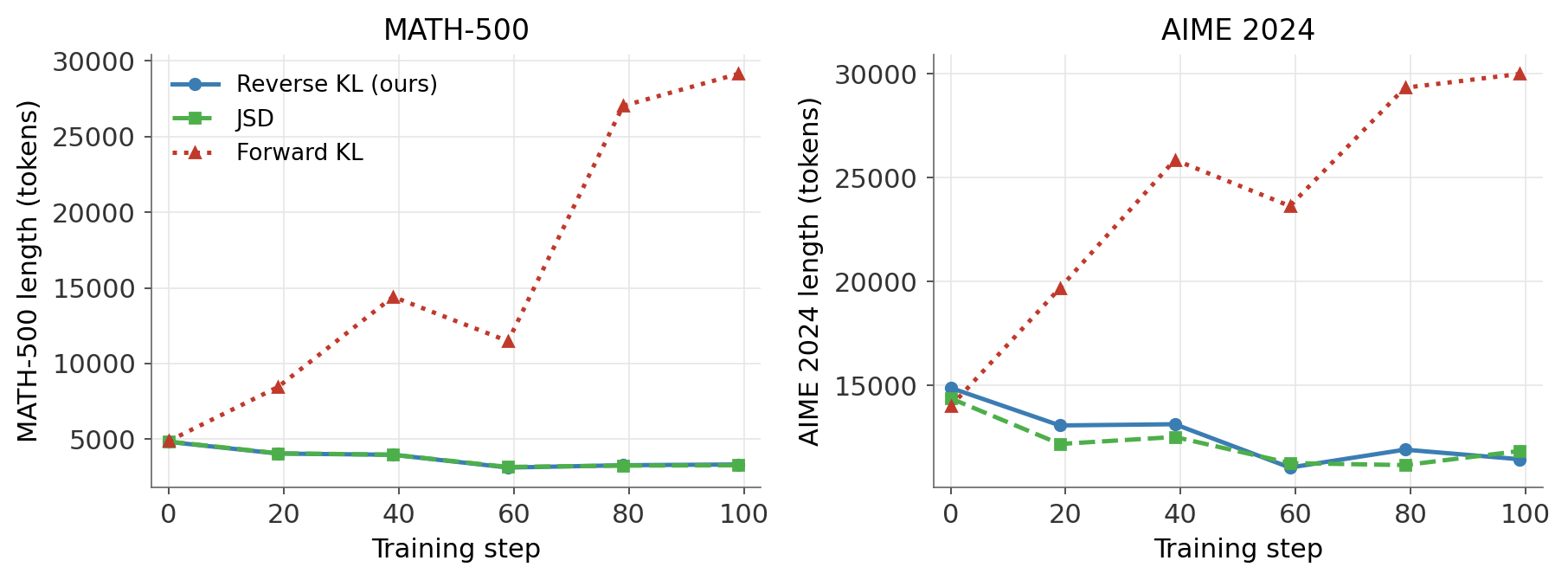}
\caption{Mean response length versus training step on Qwen3-8B for the same
  three divergences, on \textbf{a}, MATH-500 and \textbf{b}, AIME 2024. Reverse
  KL and JSD compress responses smoothly, whereas forward KL diverges to the
  30K-token cap (degenerate maximum-length generation).}
\label{fig:kl_tokens}
\end{figure}

The two failure modes are coupled. Figure~\ref{fig:kl_comparison} shows the
accuracy collapse over the first 100 steps: reverse KL and JSD stay near base
accuracy, while forward KL tracks them briefly and then falls to single digits
on MATH-500 and to near zero on AIME 2024 by step~100. Figure~\ref{fig:kl_tokens}
shows the cause: forward KL does not compress but instead drives response length
up toward the 30K-token budget on both benchmarks, so the policy degenerates
into maximum-length generation, whereas reverse KL and JSD shorten responses
smoothly over the same interval.

\subsection{Mechanism}

The contrast follows from the gradient. Under forward KL,
$\nabla_{\log p_S}\,\mathrm{KL}(p_T\|p_S) = -p_T(v)$, so updates are weighted by
the teacher's distribution regardless of the student's current state. When the
teacher is a stale copy of the student, these updates repeatedly push mass into
regions the current student does not cover, and the drift compounds until the
policy runs to maximum length. Under reverse KL,
$\nabla_\theta\,\mathrm{KL}(p_S\|p_T)=\EE_{v\sim p_S}\!\left[\nabla_\theta\log p_S(v)\,\big(\log p_S(v)-\log p_T(v)+1\big)\right]$,
so updates are weighted by the student's own distribution $p_S$. Because the
student already covers the teacher's high-probability modes, each refresh
requires only a small adjustment, which keeps training stable. JSD inherits
enough of this student-weighted gradient to remain stable, but its forward-KL
component makes it compress slightly less than pure reverse KL. We therefore use
reverse KL for all \method experiments.

\section{Entropy Preservation During Training}
\label{app:entropy}

A recurring failure mode of length-penalized RL is \emph{entropy collapse}: as the policy is pushed
toward shorter outputs it also becomes overconfident, its next-token distribution sharpens, and
generation diversity is lost~\citep{cui2025entropy, wang2025beyond80, xu2026overconfident}. Because
\method also shortens reasoning, a natural concern is whether it induces the same collapse. It does
not.

We track the student's policy entropy directly during training: at every step we compute the mean
full-vocab per-token Shannon entropy of the student's next-token distribution, averaged over the
response tokens of the on-policy rollout (computed with a tensor-parallel entropy reduction over the
sharded vocabulary; this is a logging-only quantity and does not enter the loss). Figure~\ref{fig:entropy}
plots this entropy over the first 99 training steps for Qwen3-8B and Qwen3-14B under the default
recipe (v2 teacher, $M{=}50$, reverse KL).

\begin{figure}[h]
\centering
\includegraphics[width=0.62\linewidth]{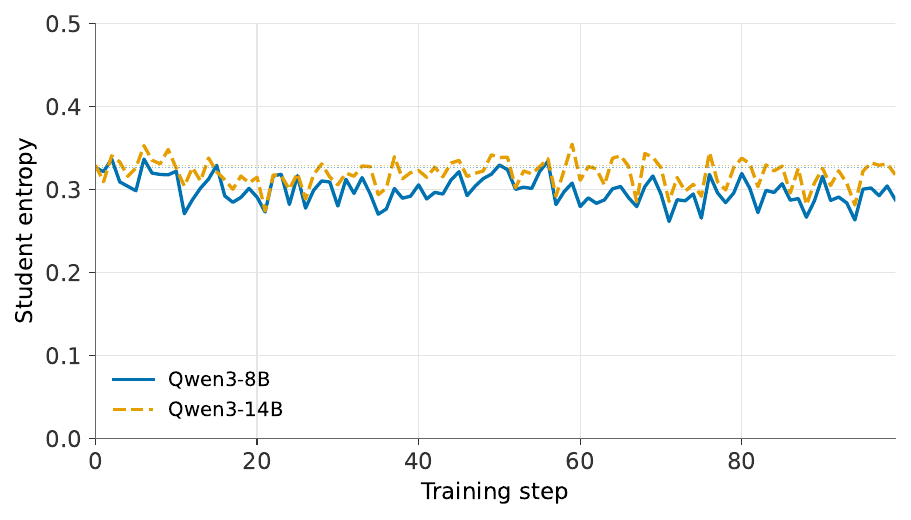}
\caption{\textbf{\method preserves policy entropy while compressing reasoning.} Mean full-vocab
per-token student entropy over training for Qwen3-8B (solid) and Qwen3-14B (dashed); dotted
lines mark each model's step-0 (base) entropy. Entropy fluctuates within a narrow band around its
starting value and shows no downward trend over the run, so the compression gains in
Table~\ref{tab:main} do not come at the cost of a collapsed, overconfident policy.}
\label{fig:entropy}
\end{figure}

For both models the entropy stays within a narrow band around its base-model value for the entire
run: Qwen3-8B moves from $0.33$ at step 0 to $0.29$ at step 99 (range ${\sim}0.27$--$0.34$), and
Qwen3-14B from $0.33$ to $0.32$ (range ${\sim}0.28$--$0.34$). There is no monotone decline toward
zero. This is consistent with the mode-seeking reverse-KL analysis (Appendix~\ref{app:kl_direction})
and the bounded-forgetting result (Proposition~\ref{prop:forgetting}): because the teacher is the
model's own concise mode rather than an external overconfident target, distilling into it removes
redundant tokens without sharpening the retained distribution. \method therefore compresses reasoning
\emph{without} the entropy collapse that length-penalized RL is prone to.

\section{Deep Planning Agentic Task}
\label{app:deepplanning}

Beyond mathematical reasoning, we evaluate whether our compressed models
retain their capability on complex, multi-step agentic tasks that require
tool calling, constraint satisfaction, and long-horizon planning.
We use the \textbf{DeepPlanning} benchmark~\citep{zhang2026deepplanning},
which evaluates LLM agents across two domains:

\paragraph{Travel Planning.}
The agent is given a natural-language travel request (e.g., ``Plan a 3-day
trip to Beijing for two people with a \$2{,}000 budget'') and must produce
a complete itinerary by issuing tool calls to query flights, trains, hotels,
restaurants, and attractions. The benchmark contains 240 test cases
(120 Chinese + 120 English). Evaluation checks both \emph{hard constraints}
(budget, dates, party size) and \emph{commonsense constraints}
(route consistency, business hours, activity diversity) across eight
dimensions. We report the \textbf{composite score}, a weighted average of
commonsense and personalization scores.

\paragraph{Shopping Planning.}
The agent must fulfill a multi-item shopping request (e.g., ``Find running
shoes under \$100, a matching sports watch, and apply any available coupons'')
by navigating a simulated e-commerce environment with tools for product
search, filtering, cart management, and coupon application. The benchmark
contains 150 test cases across three difficulty levels. We report the
\textbf{match rate}, the fraction of expected products correctly placed in
the cart.

\paragraph{Setup.}
We evaluate Qwen3-14B checkpoints from our training (teacher update
frequency $M{=}40$, reverse KL loss) on 100 travel planning samples, evaluating every 10 steps.
Each checkpoint is served via vLLM and paired with the DeepPlanning
function-calling agent, which iteratively invokes the LLM and executes tool
calls until a final plan is produced (up to 400 LLM calls per case).
We run the base model (step~0) 10 times to establish a variance estimate.

\paragraph{Results.}
Figure \ref{fig:deepplanning} shows the trade-off between response length and
planning quality. \method training progressively compresses average LLM
response tokens: shopping tokens decrease from {12.5k} (base) to {6.2k}
(step~120), a \textbf{51\% reduction}; travel tokens decrease from {5.3k}
to {3.1k}, a \textbf{42\% reduction}. Despite this substantial compression,
planning accuracy remains largely preserved: shopping match rate stays
within the baseline variance through step~70 (25.8\% vs.\ 24.2\% base),
and travel composite score is stable across all checkpoints
(17.0--19.1\% vs.\ 17.7\% base). This demonstrates that our reasoning
compression method transfers beyond mathematical reasoning to multi-step agentic
planning, reducing token consumption without degrading the model's ability
to orchestrate complex tool-calling sequences.

\begin{figure}[h]
\centering
\includegraphics[width=\linewidth]{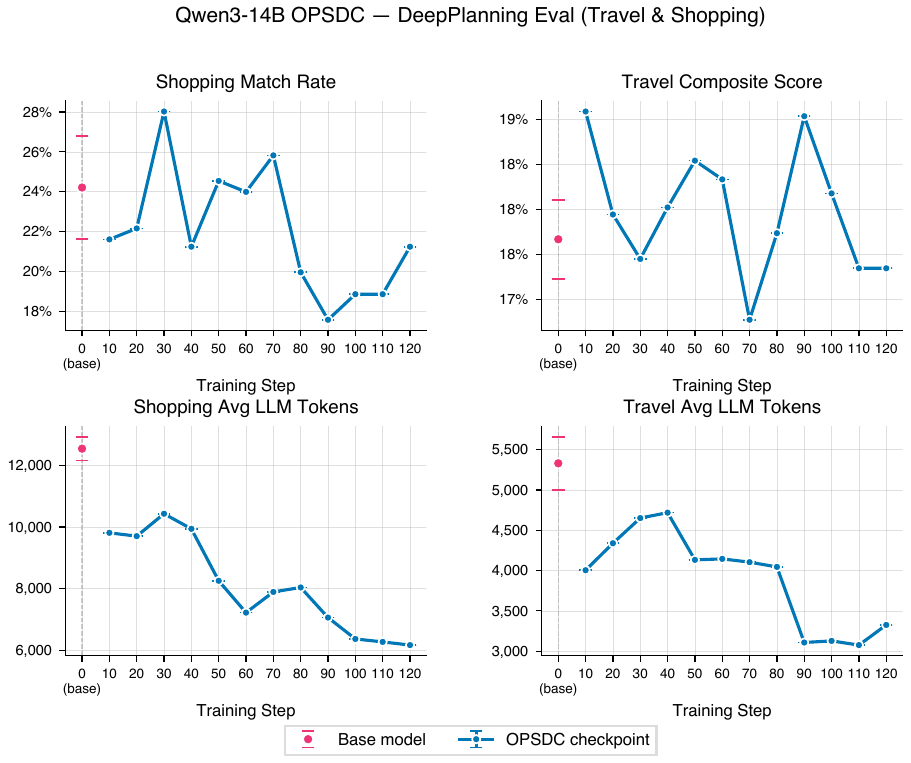}
\caption{Mean response length on training set travel planning and validation set shopping planning. \emph{Shopping match rate} measures the fraction of ground-truth products correctly placed in the agent's cart (higher is better). \emph{Travel composite score} is the average of a commonsense score (route consistency, time feasibility, business hours, etc., across 8 dimensions) and a personalization score (satisfaction of user-specified hard constraints such as budget, dates, and party size), averaged over Chinese and English test sets (higher is better).}
\label{fig:deepplanning}
\end{figure}

\section{Answer-Format Consolidation: Qualitative Examples}
\label{app:qualitative}

The prompt asks for a final line of the form ``Answer: $X$'', but the base Qwen3 models often ignore
this and instead present the final result only inside a ``$\backslash$boxed\{$\cdot$\}'' expression
after \texttt{</think>}, following their post-training convention. A strict ``Answer:''-format grader
scores such responses as wrong even when the boxed value is correct (Appendix~\ref{app:answer_format}).
\method removes this failure mode: the compressed models consolidate onto the requested ``Answer:''
format. The two examples below show the base model ending with a boxed value and no ``Answer:'' line,
while both \method variants (v1 and v2) emit the ``Answer:'' line, in addition to being much shorter.

\begin{figure}[h!]
\centering

\textbf{Problem 1 (MATH-500, intermediate algebra):} \emph{Let $a,b,c$ be real with $|ax^2+bx+c|\le 1$ for all $0\le x\le 1$. Find the largest possible value of $|a|+|b|+|c|$.} \quad (Correct answer: 17)

\vspace{0.4em}
\begin{subfigure}[t]{0.32\textwidth}
\begin{tcolorbox}[enhanced, colback=gray!5, colframe=gray!50, fonttitle=\bfseries\scriptsize, title={Qwen3-8B base (10{,}677 tokens)}, left=3pt, right=3pt, top=2pt, bottom=2pt]
\scriptsize
{\ttfamily\color{gray!60}<think>}

\textcolor{gray}{\ldots}extensive search over the Chebyshev polynomial $8x^2-8x+1$, with repeated verification that no other choice exceeds the bound\ldots

{\ttfamily\color{gray!60}</think>}

\vspace{1pt}\noindent\textcolor{gray!30}{\rule{\linewidth}{0.3pt}}\vspace{2pt}

Thus the maximum of $|a|+|b|+|c|$ is $$\boxed{17}$$

\vspace{2pt}
\textcolor{red}{\scriptsize No ``Answer:'' line; the result appears only inside \texttt{\textbackslash boxed\{\}}.} \rlap{\;\textcolor{red}{\ding{55}}\;\scriptsize\textcolor{red}{(strict format)}}
\end{tcolorbox}
\end{subfigure}
\hfill
\begin{subfigure}[t]{0.32\textwidth}
\begin{tcolorbox}[enhanced, colback=teacherbg, colframe=green!55!black, fonttitle=\bfseries\scriptsize, title={\method~v2 (7{,}893 tokens)}, left=3pt, right=3pt, top=2pt, bottom=2pt]
\scriptsize
{\ttfamily\color{gray!60}<think>}

The polynomial $8x^2-8x+1$ (a scaled Chebyshev polynomial) meets the constraint and maximizes $|a|+|b|+|c|=8+8+1$.

{\ttfamily\color{gray!60}</think>}

\vspace{1pt}\noindent\textcolor{gray!30}{\rule{\linewidth}{0.3pt}}\vspace{2pt}

The maximum is $8+8+1$.

Answer: \textcolor{green!50!black}{$\boldsymbol{17}$} \rlap{\;\textcolor{green!50!black}{\checkmark}}
\end{tcolorbox}
\end{subfigure}
\hfill
\begin{subfigure}[t]{0.32\textwidth}
\begin{tcolorbox}[enhanced, colback=orange!3, colframe=orange!60!black, fonttitle=\bfseries\scriptsize, title={\method~v1 (4{,}144 tokens)}, left=3pt, right=3pt, top=2pt, bottom=2pt]
\scriptsize
{\ttfamily\color{gray!60}<think>}

Chebyshev choice $8x^2-8x+1$ satisfies $|ax^2+bx+c|\le1$ on $[0,1]$, giving $|a|+|b|+|c|=8+8+1=17$.

{\ttfamily\color{gray!60}</think>}

\vspace{1pt}\noindent\textcolor{gray!30}{\rule{\linewidth}{0.3pt}}\vspace{2pt}

$|a|+|b|+|c| = 8+8+1 = 17$.

Answer: \textcolor{green!50!black}{$\boldsymbol{17}$} \rlap{\;\textcolor{green!50!black}{\checkmark}}
\end{tcolorbox}
\end{subfigure}

\vspace{1.2em}

\textbf{Problem 2 (AIME 2024, geometry):} \emph{Tetrahedron $ABCD$ has $AB{=}CD{=}\sqrt{41}$, $AC{=}BD{=}\sqrt{80}$, $BC{=}AD{=}\sqrt{89}$. The equal face-distance from the interior point can be written $\tfrac{m\sqrt n}{p}$; find $m{+}n{+}p$.} \quad (Correct answer: 104)

\vspace{0.4em}
\begin{subfigure}[t]{0.32\textwidth}
\begin{tcolorbox}[enhanced, colback=gray!5, colframe=gray!50, fonttitle=\bfseries\scriptsize, title={Qwen3-8B base (7{,}971 tokens)}, left=3pt, right=3pt, top=2pt, bottom=2pt]
\scriptsize
{\ttfamily\color{gray!60}<think>}

\textcolor{gray}{\ldots}computes the volume and total face area, finds inradius $\tfrac{20\sqrt{21}}{63}$, then $m{=}20,n{=}21,p{=}63$\ldots

{\ttfamily\color{gray!60}</think>}

\vspace{1pt}\noindent\textcolor{gray!30}{\rule{\linewidth}{0.3pt}}\vspace{2pt}

Thus $m+n+p = 20+21+63 = \boxed{104}$

\vspace{2pt}
\textcolor{red}{\scriptsize No ``Answer:'' line; result only inside \texttt{\textbackslash boxed\{\}}.} \rlap{\;\textcolor{red}{\ding{55}}\;\scriptsize\textcolor{red}{(strict format)}}
\end{tcolorbox}
\end{subfigure}
\hfill
\begin{subfigure}[t]{0.32\textwidth}
\begin{tcolorbox}[enhanced, colback=teacherbg, colframe=green!55!black, fonttitle=\bfseries\scriptsize, title={\method~v2 (5{,}854 tokens)}, left=3pt, right=3pt, top=2pt, bottom=2pt]
\scriptsize
{\ttfamily\color{gray!60}<think>}

Inradius $r=\tfrac{20\sqrt{21}}{63}$, so $m{=}20,n{=}21,p{=}63$ and $m+n+p=104$.

{\ttfamily\color{gray!60}</think>}

\vspace{1pt}\noindent\textcolor{gray!30}{\rule{\linewidth}{0.3pt}}\vspace{2pt}

$m+n+p = 20+21+63 = 104$.

Answer: \textcolor{green!50!black}{$\boldsymbol{104}$} \rlap{\;\textcolor{green!50!black}{\checkmark}}
\end{tcolorbox}
\end{subfigure}
\hfill
\begin{subfigure}[t]{0.32\textwidth}
\begin{tcolorbox}[enhanced, colback=orange!3, colframe=orange!60!black, fonttitle=\bfseries\scriptsize, title={\method~v1 (5{,}833 tokens)}, left=3pt, right=3pt, top=2pt, bottom=2pt]
\scriptsize
{\ttfamily\color{gray!60}<think>}

$r=\tfrac{20\sqrt{21}}{63}$ in the form $\tfrac{m\sqrt n}{p}$ with $m{=}20,n{=}21,p{=}63$.

{\ttfamily\color{gray!60}</think>}

\vspace{1pt}\noindent\textcolor{gray!30}{\rule{\linewidth}{0.3pt}}\vspace{2pt}

$m+n+p=104$.

Answer: \textcolor{green!50!black}{$\boldsymbol{104}$} \rlap{\;\textcolor{green!50!black}{\checkmark}}
\end{tcolorbox}
\end{subfigure}

\caption{\textbf{\method consolidates onto the requested ``Answer:'' format.} On a MATH-500 problem (top) and an AIME 2024 problem (bottom), the base Qwen3-8B model presents its final result only inside ``$\backslash$boxed\{$\cdot$\}'' with no ``Answer:'' line, which a strict answer-format grader scores as wrong even though the boxed value is correct. Both \method variants (v1 uniform, v2 difficulty-aware) emit the requested ``Answer:'' line and are also substantially shorter. This is the mechanism behind the answer-only/boxed-only shift in Appendix~\ref{app:answer_format}.}
\label{fig:qualitative}
\end{figure}

\end{document}

%% file: extra_pkgs.tex
\usepackage[export]{adjustbox}
\usepackage{float}
\usepackage[ruled]{algorithm2e}
\usepackage[inline, shortlabels]{enumitem}
\usepackage[T1]{fontenc}
\usepackage[colorlinks=true,linkcolor=blue!70!black,citecolor=blue!70!black,urlcolor=blue!70!black]{hyperref}
\usepackage{microtype}
\usepackage{pifont}
\newcommand{\cmark}{\ding{51}}
\newcommand{\xmark}{\ding{55}}
\usepackage{xcolor}
\usepackage{xurl}
\usepackage[most]{tcolorbox}
\definecolor{studentbg}{HTML}{FFF3E0}
\definecolor{teacherbg}{HTML}{E8F5E9}
\usepackage{graphicx}
\usepackage{booktabs}
\usepackage{makecell}
\usepackage{tabularray}
\usepackage{tabularx}
\usepackage{listings}
\usepackage{amsmath, amsfonts}
\usepackage{nicefrac}

\UseTblrLibrary{booktabs}

\makeatletter
\newcommand{\ssymbol}[1]{\@fnsymbol{#1}}
\newcommand{\romanNumeral}[1]{\expandafter\@slowromancap\romannumeral #1@}
\makeatother